\documentclass[runningheads]{llncs}

\usepackage{eccv}

\usepackage{eccvabbrv}

\usepackage[T1]{fontenc}
\usepackage[utf8]{inputenc}
\usepackage{graphicx}
\usepackage{comment}
\usepackage{url}
\usepackage{booktabs}
\usepackage{amsfonts}
\usepackage{nicefrac}
\usepackage{microtype}
\usepackage{xcolor}

\usepackage{amsmath,amssymb}

\usepackage{amsthm}
\usepackage{colortbl}
\usepackage{multirow}
\usepackage{subcaption}
\usepackage{wrapfig}
\usepackage{xspace}
\usepackage{pifont}
\usepackage{svg}

\newcommand{\seenoevil}{\raisebox{-0.2em}{\includegraphics[height=1.1em]{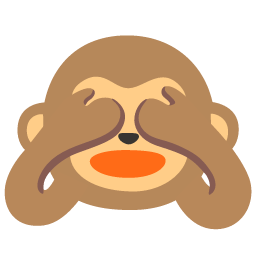}}}
\newcommand{\hearnoevil}{\raisebox{-0.2em}{\includegraphics[height=1.1em]{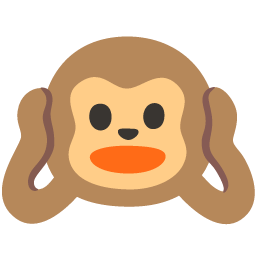}}}
\newcommand{\speaknoevil}{\raisebox{-0.2em}{\includegraphics[height=1.1em]{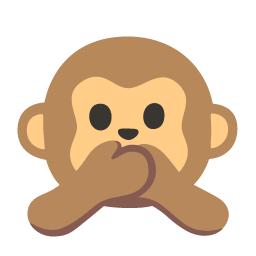}}}
\newcommand{\monkey}{\raisebox{-0.2em}{\includegraphics[height=1.3em]{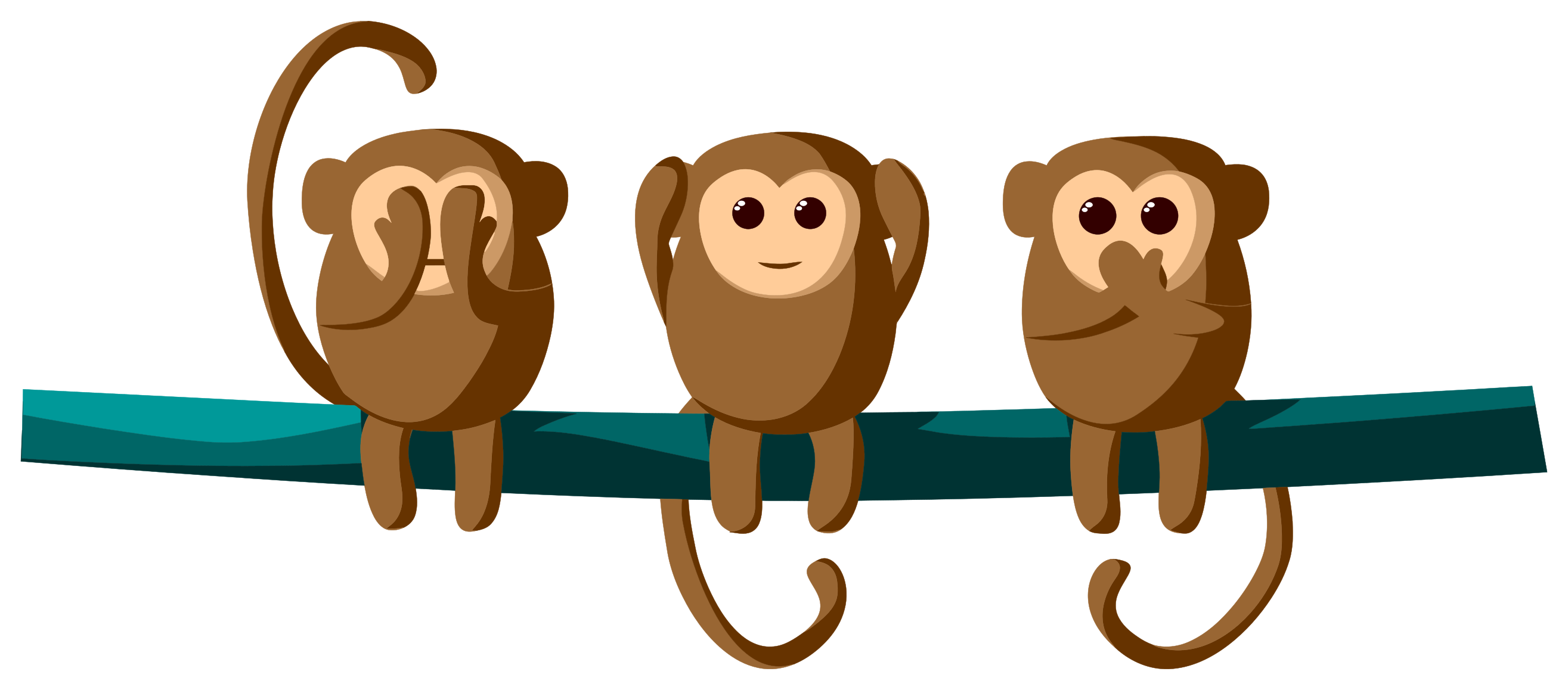}}}

\newcommand{\xmark}{\ding{55}}
\newcommand{\cmark}{\checkmark}
\newcommand{\bx}{\mathbf{x}}
\newcommand{\by}{\mathbf{y}}
\newcommand{\bz}{\mathbf{z}}
\newcommand{\bs}{\mathbf{s}}
\newcommand{\sx}{\mathbf{s}_x}
\newcommand{\sy}{\mathbf{s}_y}
\newcommand{\sg}{\mathrm{sg}}
\newcommand{\Fproj}{F^{\mathrm{proj}}_w}
\newcommand{\Epred}{E^{\mathrm{pred}}_w}
\newcommand{\Fpred}{F^{\mathrm{pred}}_w}
\newcommand{\Linv}{\mathcal{L}_{\mathrm{inv}}}
\newcommand{\Lpred}{\mathcal{L}_{\mathrm{pred}}}
\newcommand{\Lreg}{\mathcal{L}_{\mathrm{reg}}}
\newcommand{\Ltotal}{\mathcal{L}_{\mathrm{total}}}
\newcommand{\syn}[2]{\mathbf{s}_{y,#1}^{(#2)}}

\theoremstyle{plain}
\newtheorem{thmx}[equation]{Theorem}
\newtheorem{defx}[equation]{Definition}
\newtheorem{propx}[equation]{Proposition}
\newtheorem{lemx}[equation]{Lemma}
\newtheorem{corx}[equation]{Corollary}
\newtheorem{remx}[equation]{Remark}

\newenvironment{definition}[2]{\begin{defx}[#1]\label{def:#2}}{\end{defx}}
\newenvironment{theorem}[2]{\begin{thmx}[#1]\label{theorem:#2}}{\end{thmx}}
\newenvironment{proposition}[2]{\begin{propx}[#1]\label{proposition:#2}}{\end{propx}}
\newenvironment{lemma}[2]{\begin{lemx}[#1]\label{lemma:#2}}{\end{lemx}}
\newenvironment{corollary}[2]{\begin{corx}[#1]\label{corollary:#2}}{\end{corx}}
\newenvironment{remark}[2]{\begin{remx}[#1]\label{remark:#2}}{\end{remx}}

\usepackage{hyperref}

\usepackage[accsupp]{axessibility}  

\usepackage[capitalize,nameinlink]{cleveref}

\crefname{thmx}{Theorem}{Theorems}
\Crefname{thmx}{Theorem}{Theorems}
\crefname{defx}{Definition}{Definitions}
\Crefname{defx}{Definition}{Definitions}
\crefname{propx}{Proposition}{Propositions}
\Crefname{propx}{Proposition}{Propositions}
\crefname{lemx}{Lemma}{Lemmas}
\Crefname{lemx}{Lemma}{Lemmas}
\crefname{corx}{Corollary}{Corollaries}
\Crefname{corx}{Corollary}{Corollaries}
\crefname{remx}{Remark}{Remarks}
\Crefname{remx}{Remark}{Remarks}

\begin{document}

\title{\texorpdfstring{\monkey}{} Three Necessary Principles\\for Self-Supervised Visual Representation Learning}

\titlerunning{Three Necessary Principles for SSL}

\author{
Nikos Giakoumoglou\inst{1} \and
Paschalis Giakoumoglou\inst{2} \and
Tania Stathaki\inst{1}
}

\authorrunning{N. Giakoumoglou et al.}

\institute{Imperial College London\\
\email{\{nikos,tania\}@imperial.ac.uk} \and
CERTH, ITI\\
\email{giakoupg@iti.gr}
}

\maketitle


\begin{abstract}

We argue that learning visual representations without labels requires a training signal jointly complete across three non-overlapping objectives: semantic invariance across augmented views, patch-level spatial prediction, and representational non-degeneracy. We formalize these as the \emph{observation}, \emph{prediction}, and \emph{regularization} principles and prove \textbf{(i)}~that combining observation and prediction without regularization admits the constant encoder as a global minimizer under negative-free alignment; \textbf{(ii)}~that the two objectives are gradient-complementary and structurally non-conflicting at the encoder output; and \textbf{(iii)}~that the momentum encoder converges to the same fixed point as the online encoder and provides no collapse guarantee at convergence. Contrastive alignment provides only self-limiting collapse resistance, formalized via an explicit gradient-decay argument. Dropping prediction withholds the spatial training signal by construction; dropping observation forfeits cross-view semantic invariance by construction; at the scale we study, no pair substitutes for the third. Every major self-supervised method is a special case of a single unified energy decomposition. We pair every theoretical claim with a controlled experiment, including a patch-retrieval evaluation for the spatial consequence of prediction.

\keywords{Self-supervised learning \and Representation learning \and Energy-based models \and Collapse \and JEPA}

\end{abstract}

\section{Introduction}
\label{sec:introduction}

Self-supervised visual representation learning can be understood through a single lens: methods learn a compatibility function over pairs of observations, assigning low energy to compatible inputs and high energy otherwise~\cite{lecun2022path,dawid2023introduction}. The central design question is what structure the energy surface should encode, and how. In practice this reduces to two components: a projector, which maps globally pooled representations to an embedding space where view-level compatibility is measured, and a predictor, which maps context patches and positional information to predicted target representations. \emph{Invariance} methods~\cite{oord2019cpc,he2020momentum,chen2020simclr,grill2020byol,chen2021simsiam,bardes2022vicreg,zbontar2021barlow,giakoumoglou2024synco} train only the projector, mapping augmented views of the same image to nearby representations; the result is semantically rich but discards spatial structure through global pooling. \emph{Joint-Embedding Predictive Architectures} (JEPAs)~\cite{lecun2022path,assran2023ijepa,bardes2024vjepa} train only the predictor, recovering the latent representation of masked targets from visible context; the abstract target lets the encoder discard irrelevant variation, but without an invariance objective the representations can be locally consistent yet globally unstructured, lacking the cross-view semantic alignment that invariance methods provide. Both families face a common failure mode, \emph{representational collapse}~\cite{jing2021understanding}, each handled with its own mechanism, negative mining, asymmetric dynamics, or statistical regularization, typically conflated with the primary objective rather than treated as a separate requirement.

We distinguish three non-equivalent failure modes. \emph{Optimization collapse}: the constant encoder achieves zero training loss (Theorem~\ref{theorem:collapse}). \emph{Dimensional collapse}: the representation covariance has low effective rank, measurable even at non-zero loss. \emph{Semantic impoverishment}: low linear-probe accuracy despite non-trivial effective rank. The three can occur independently (e.g., Table~\ref{tab:components} rows~\textcolor{gray}{H} and~\textcolor{gray}{I}). Definition~\ref{def:reg} addresses dimensional collapse; optimization collapse is a sufficient condition for it when no other mechanism is present.

We argue that a complete training signal must satisfy three distinct and non-overlapping conditions. An encoder must \emph{observe}~\seenoevil\ (Definition~\ref{def:obs}: produce semantically invariant global representations via the projector), \emph{predict}~\hearnoevil\ (Definition~\ref{def:pred}: support latent-space prediction of masked content via the predictor), and \emph{regularize}~\speaknoevil\ (Definition~\ref{def:reg}: prevent dimensional collapse via an explicit geometric constraint). We prove regularization necessary in the negative-free regime (Theorem~\ref{theorem:collapse}) and formalize the self-limiting contrastive resistance via a gradient-decay argument (Remark~\ref{remark:ntxent_partial}). For prediction and observation we give structural arguments and validate empirically, including with a patch-retrieval evaluation (Section~\ref{sec:experiments}). We further prove observation and prediction gradient-complementary at the encoder output (Theorem~\ref{theorem:gradient_decomp}, for both $\mathcal{L}_{\mathrm{MSE}}$ and $\mathcal{L}_{\mathrm{NT\text{-}Xent}}$ under a mean-pool projector).

This decomposition unifies the field~\cite{giakoumoglou2024review}; every major method is recovered by zeroing coefficients of a single energy. Removing prediction recovers invariance methods~\cite{chen2020simclr,he2020momentum,grill2020byol,chen2021simsiam,zbontar2021barlow,bardes2022vicreg,giakoumoglou2024synco}; removing observation recovers predictive methods~\cite{assran2023ijepa,balestriero2025lejepa,kuang2026lpjepa}. Methods satisfying all three but retaining implicit mechanisms~\cite{mo2024cjepa,oquab2024dinov2} exhibit a redundancy we prove formally (Theorem~\ref{theorem:ema_collapse}) and confirm empirically (Table~\ref{tab:momentum}).

\noindent\textbf{Contributions.} \textbf{(i)}~We formalize the three principles as non-overlapping conditions on the energy surface (Section~\ref{sec:conditions}), prove regularization optimization-theoretically necessary under negative-free alignment, formalize contrastive self-limiting resistance via a gradient-decay argument, and argue prediction and observation structurally necessary. \textbf{(ii)}~We show all major SSL methods are special cases of a unified energy decomposition (Table~\ref{tab:taxonomy}). \textbf{(iii)}~We prove gradient complementarity of observation and prediction (Theorem~\ref{theorem:gradient_decomp}) and momentum-encoder redundancy at convergence (Theorem~\ref{theorem:ema_collapse}). \textbf{(iv)}~We pair every theoretical claim with a controlled experiment on ViT-Tiny/STL-10, including patch-retrieval for the spatial consequence of prediction (Section~\ref{sec:experiments}).

\section{Background}
\label{sec:background}

We develop the minimal Energy-Based Model (EBM) formalism our argument needs; the standard contrastive and regularized training strategies and their energy landscapes are recalled in Appendix~\ref{appendix:ebm}.

Following LeCun \etal~\cite{lecun2022path} and Dawid \etal~\cite{dawid2023introduction}, an EBM is a scalar function $F_w : \mathcal{X} \times \mathcal{Y} \rightarrow \mathbb{R}$ that is low when $\bx$ and $\by$ are \emph{compatible} and high otherwise. In self-supervised vision, $\bx$ and $\by$ are two augmented views of the same image, or a visible context and a masked target. A latent-variable generalization introduces $\bz \in \mathcal{Z}$ and eliminates it by minimization, yielding the free energy $F_w(\bx, \by) = \min_{\bz \in \mathcal{Z}} E_w(\bx, \by, \bz)$~\cite{lecun2022path}. The difficulty is preventing the low-energy region from expanding to cover all of $\mathcal{Y}$; contrastive methods raise the energy of negatives, while regularized methods constrain the low-energy volume directly (Appendix~\ref{appendix:ebm}). The degenerate failure is \emph{representational collapse}, $f_\theta(\bx) = c$ for all $\bx$, or more generally a low-dimensional subspace~\cite{jing2021understanding}.

The JEPA~\cite{lecun2022path,assran2023ijepa} instantiates this framework with a shared encoder $f_\theta$ and predictor $g_\phi$: $E_w(\bx, \by, \bz) = D(f_\theta(\by), g_\phi(f_\theta(\bx), \bz))$. The critical distinction from pixel-space reconstruction~\cite{he2022mae,baevski2022data2vec} is that the target is the representation $f_\theta(\by)$ rather than the raw signal, so the encoder may discard irrelevant variation and the predictor need only recover abstract spatial structure~\cite{lecun2022path}.

\paragraph{\bf Energy surface realizations.} Two architectural components realize this energy in practice. A \emph{projector} $h_\psi : \mathbb{R}^D \rightarrow \mathbb{R}^{D'}$ maps globally pooled representations to a space where view-level compatibility is measured, inducing
\begin{equation}
    \Fproj(\bx, \by) = d\!\left(h_\psi(\sx),\, h_\psi(\sy)\right),
    \label{eq:Fproj}
\end{equation}
where $\sx, \sy$ are the mean-pooled representations and $d$ is a distance. A \emph{predictor} $g_\phi : \mathbb{R}^{|\mathcal{C}| \times D} \times \mathcal{Z} \rightarrow \mathbb{R}^D$ maps context patches and positional information to predicted target representations, inducing
\begin{equation}
\begin{aligned}
\Epred(\bx, \by, \bz)
    &= D\!\left(f_\theta(\by)^{(t)},\;
        g_\phi(f_\theta(\bx)|_\mathcal{C},\, \bz)\right), \\
\Fpred(\bx, \by)
    &= \min_{\bz} \Epred(\bx, \by, \bz).
\end{aligned}
\label{eq:Epred}
\end{equation}
Thus $\Fproj$ shapes the surface globally (what content is present, irrespective of where) while $\Fpred$ shapes it locally (where content is, and how it relates spatially). As we formalize next, these aspects are non-overlapping and both necessary.

\section{Three Necessary Principles}
\label{sec:conditions}

Self-supervised methods differ not in their underlying principle, since all shape an energy surface to reflect compatibility, but in which aspect they model and how they prevent collapse. A complete training signal must satisfy three non-overlapping conditions, which we obtain by decomposing the total energy as
\begin{equation}
    F_w(\bx, \by) = \alpha\, \Fproj(\bx, \by) \;+\; \beta\, \Fpred(\bx, \by) \;+\; \gamma\, \Omega(f_\theta),
    \label{eq:unified}
\end{equation}
with scalar weights $\alpha, \beta, \gamma \geq 0$ and a geometric regularizer $\Omega$ on the batch distribution of encoder representations. Setting individual coefficients to zero recovers strict subsets of the three conditions.

\begin{definition}{\seenoevil~Observation Principle}{obs}
An encoder $f_\theta$ with projector $h_\psi$ satisfies the \emph{observation principle} if, for any two compatible views, $\Fproj(\bx, \by) = d(h_\psi(\sx), h_\psi(\sy))$ is minimized; the projector shapes the energy surface so that same-image view pairs receive low energy irrespective of augmentation.
\end{definition}

Because $d$ operates on global representations, spatial arrangement within $\bx$ is not preserved; the observation principle is necessary but not sufficient for a world model.

\begin{definition}{\hearnoevil~Prediction Principle}{pred}
An encoder $f_\theta$ with predictor $g_\phi$ satisfies the \emph{prediction principle} if, given context $\bx$ and target $\by$ with positional information $m_t$, $\Epred(\bx, \by, m_t) = D(f_\theta(\by)^{(t)}, g_\phi(f_\theta(\bx)|_\mathcal{C}, m_t))$ is minimized.
\end{definition}

The prediction target $f_\theta(\by)^{(t)}$ is a latent representation rather than the raw signal, so the encoder may discard irrelevant variation. Satisfying this within an image does not enforce that $\sx$ and $\sy$ be close across augmented views; prediction does not imply observation.

\begin{definition}{\speaknoevil~Regularization Principle}{reg}
An encoder $f_\theta$ satisfies the \emph{regularization principle} if the training objective contains a term $\Omega$ depending only on the empirical batch distribution $\hat{p} = \frac{1}{B}\sum_{n=1}^{B} \delta_{\bs_{y,n}}$, strictly positive at every Dirac mass and minimized only at distributions whose representation covariance has full effective rank over $\mathbb{R}^D$.
\end{definition}

This places no constraint on the content of representations, only their geometry; $\Omega$ shrink-wraps the low-energy region around the data manifold, preventing both complete and dimensional collapse~\cite{jing2021understanding}. The three instantiations we study (Section~\ref{sec:method}, Appendix~\ref{appendix:reg_prevents}) each meet this definition.

\subsection{Necessity}
\label{subsec:necessity}

We now argue that each of the three principles is individually necessary; dropping any one admits a degenerate or impoverished solution that the remaining two cannot rule out at the scale we study. For regularization we give an optimization-theoretic proof under negative-free alignment and formalize contrastive resistance via a gradient-decay argument. For prediction and observation the arguments are structural: the omitted signal is simply never provided.

\begin{proposition}{Necessity of Regularization}{necessity_reg}
Satisfying Definitions~\ref{def:obs} and~\ref{def:pred} without Definition~\ref{def:reg} admits dimensional collapse at the studied scale. Under $\Linv = \mathcal{L}_{\mathrm{MSE}}$, the constant encoder is a global minimizer achieving zero loss (Theorem~\ref{theorem:collapse}). Under $\Linv = \mathcal{L}_{\mathrm{NT\text{-}Xent}}$, the contrastive term acts as an implicit regularizer whose resistance is batch-size-dependent and self-limiting: as representations homogenize, all pairwise similarities converge and the NT-Xent gradient with respect to the encoder approaches zero (Remark~\ref{remark:ntxent_partial}), so the contrastive term cannot substitute for an explicit $\Lreg$ that maintains a positive gradient at every step.
\end{proposition}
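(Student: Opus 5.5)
The plan is to split the proposition into its two regimes, treating the shared setup first. With $\gamma = 0$ in \eqref{eq:unified}, the objective is $\alpha\,\Linv + \beta\,\Lpred$. Here $\Linv$ is evaluated on $h_\psi(\sx), h_\psi(\sy)$, and $\Lpred = D(f_\theta(\by)^{(t)}, g_\phi(f_\theta(\bx)|_\mathcal{C}, m_t))$.

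\textbf{MSE case.} I would exhibit the degenerate solution explicitly rather than reprove Theorem~\ref{theorem:collapse}. Take $f_\theta(\cdot) \equiv c$, so every patch token equals $c$ and both mean-pooled vectors equal $c$. Then $h_\psi(\sx) = h_\psi(\sy)$, and $\mathcal{L}_{\mathrm{MSE}} = 0$. Take $g_\phi$ to be the constant map to $c$; a zero-weight final layer with bias $c$ is realizable in any standard predictor. Then the prediction term is also $0$. Since both terms are nonnegative distances, zero is the global minimum, which is exactly the content of Theorem~\ref{theorem:collapse}. The representation covariance of $\hat p$ is then $0$. Its effective rank is therefore minimal, which is dimensional collapse in the sense that motivates Definition~\ref{def:reg}. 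Since $\Omega$ is absent, nothing penalizes this point. This settles the first clause.

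\textbf{NT-Xent case.} Here the constant encoder is not a zero-loss minimizer: the loss equals $\log(2B-1)$ there, and is strictly larger than at well-separated configurations. So the claim to prove is the weaker, gradient-level statement of Remark~\ref{remark:ntxent_partial}. Write $z_i = h_\psi(\bs_i)/\|h_\psi(\bs_i)\|$ and $\ell_i = -\log \frac{\exp(z_i^\top z_{i^+}/\tau)}{\sum_{k\neq i} \exp(z_i^\top z_k/\tau)}$. The standard computation gives
\begin{equation*}
\nabla_{z_i}\ell_i = \tfrac{1}{\tau}\Bigl(\textstyle\sum_{k\neq i} p_{ik} z_k - z_{i^+}\Bigr),
\end{equation*}
with softmax weights $p_{ik}$. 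Suppose the representations homogenize, $z_k \to \bar z$ for all $k$. Then $\sum_k p_{ik} z_k - z_{i^+} \to \bar z - \bar z = 0$. Quantitatively, the norm of this vector is bounded by $\tfrac{2}{\tau}\max_{k}\|z_k - \bar z\|$. Chaining through the normalization Jacobian, which is bounded away from the origin, and through the projector and mean-pool Jacobians, gives an encoder gradient of norm $O(\max_k \|z_k - \bar z\|)$. The resistance therefore vanishes as collapse proceeds. The batch-size dependence enters through the weights $p_{ik} \approx 1/(2B-1)$. Near homogeneity, each individual repulsive contribution shrinks like $1/B$, and the total restoring force depends on the spread of the $z_k$ rather than on $B$ alone.

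\textbf{Contrast with $\Lreg$, and the main obstacle.} I would close by contrasting this with Definition~\ref{def:reg}. Because $\Omega$ is strictly positive at every Dirac mass and minimized only at full effective rank, a differentiable $\Omega$ keeps a non-vanishing gradient away from its minimizers. The VICReg hinge is the example: its variance term has gradient magnitude bounded below as the variance goes to $0$. NT-Xent lacks this property. The obstacle I expect is making ``homogenize'' precise without overclaiming. A vanishing gradient in the limit is not the same as collapse being reachable, since the collapsed point is an unstable stationary point of NT-Xent. The hard step is therefore phrasing the conclusion as ``cannot substitute for an explicit $\Lreg$ that maintains a positive gradient at every step''. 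I would state it as a comparison of gradient magnitudes along any homogenizing trajectory, and leave the empirical ``at the studied scale'' part to the experiments.
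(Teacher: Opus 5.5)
Your MSE case is the paper's own argument (Theorem~\ref{theorem:collapse}). Your NT-Xent computation sharpens Remark~\ref{remark:ntxent_partial}. The paper only notes that the denominator saturates at $K e^{1/\tau}$ and that each negative contributes $\mathcal{O}(1/K)$. You show that the $\mathcal{O}(1)$ repulsive term cancels the attractive one, so the net gradient is $\mathcal{O}(\max_k\|z_k-\bar z\|)$, which is what the proposition asserts. Three minor points:
\begin{itemize}
\item In the paper's \eqref{eq:ntxent} the positive is excluded from the denominator ($K=B-1$ negatives), so the collapsed value is $\log(B-1)$ and the softmax weights are $1/(B-1)$.
\item The full gradient at $z_i$ also collects terms from other anchors, which obey the same bound.
\item Your bound makes the restoring force depend on the spread rather than on $B$, so the ``batch-size-dependent'' part of the statement has to come from Table~\ref{tab:batchsize}, as in the paper.
\end{itemize}

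The genuine gap is your final contrast with $\Lreg$.
\begin{itemize}
\item Strict positivity at every Dirac mass, plus full-rank minimizers, does not give a non-vanishing gradient away from the minimizers. A smooth function can have non-minimizing stationary points, and the paper's regularizers do.
\item Your VICReg example bounds the derivative in the variance, $|\partial_V(\gamma_0-\sqrt{V+\delta})| = 1/(2\sqrt{V+\delta})$. For NT-Xent, however, you bounded the gradient in the representations. Since $\partial\,\mathrm{Var}(\mathbf{P}^{(:,d)})/\partial\mathbf{P}^{(n,d)} \propto \mathbf{P}^{(n,d)}-\bar{\mathbf{P}}^{(d)}$, with $\delta=10^{-4}$ the VCReg gradient at the representations is $\mathcal{O}(\text{spread}/\sqrt{\delta})$. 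It is exactly zero at collapse, and the covariance term is higher order still. This is the same linear vanishing you proved for NT-Xent.
\item SIGReg likewise has zero gradient at the Dirac mass at the origin. Each term of its gradient carries a factor $\sin(\boldsymbol{\omega}^\top\mathbf{p}_n)$ or $\mathbb{E}_{\mathbf{p}}[\sin(\boldsymbol{\omega}^\top\mathbf{p})]$, and both vanish there.
\end{itemize}
So the instability you correctly note for NT-Xent at collapse is shared by $\Lreg$. Comparing gradient magnitudes along a homogenizing trajectory does not separate the two; at most it compares constants.

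The paper does not close this step either. Remark~\ref{remark:ntxent_partial} cites Appendix~\ref{appendix:reg_prevents} for a gradient claim. Lemmas~\ref{lemma:var_collapse} and~\ref{lemma:sigreg_rdmreg_positive} and Theorem~\ref{theorem:vcreg_prevents} prove only value positivity, i.e.\ that the collapsed state is not a global minimizer once $\gamma>0$.

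The defensible version pairs your $\mathcal{O}(\text{spread})$ bound with that value-level fact. $\Omega$ adds a penalty $\gamma\,\Omega(\delta_{\mathbf{c}'})>0$ at every collapsed configuration, and this penalty depends only on the Dirac mass, not on $\Linv$ or $B$. The ``cannot substitute'' conclusion should then be presented as an interpretation supported by Table~\ref{tab:components}, not as a gradient dichotomy.
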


\begin{theorem}{Collapse under observation and prediction without regularization}{collapse}
Let $\gamma = 0$ and $\Linv = \mathcal{L}_{\mathrm{MSE}}$. Then the constant encoder $f_\theta(\bx) = \mathbf{c}\,\mathbf{1}_N^\top$ for all $\bx$, with a predictor satisfying $g_\phi(\mathbf{c}\,\mathbf{1}_{|\mathcal{C}|}^\top, m_t) = \mathbf{c}$, is a global minimizer of $\alpha\mathcal{L}_{\mathrm{MSE}} + \beta\Lpred$ achieving value zero, for any $\alpha, \beta > 0$.
\end{theorem}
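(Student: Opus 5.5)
The argument is a direct evaluation plus a lower-bound observation. Both $\mathcal{L}_{\mathrm{MSE}}$ and $\Lpred$ are averages of squared distances (or, more generally, of a distance $d$ or $D$ satisfying $D(u,u)=0$ and $D\ge 0$). Hence the objective $\alpha\mathcal{L}_{\mathrm{MSE}}+\beta\Lpred$ is bounded below by $0$ for any $\alpha,\beta>0$. It therefore suffices to exhibit that the proposed pair (constant encoder, predictor fixing $\mathbf{c}$) attains the value $0$; global minimality then follows immediately from nonnegativity.

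\textbf{Step 1 (observation term).} Under $f_\theta(\bx)=\mathbf{c}\,\mathbf{1}_N^\top$, every patch token equals $\mathbf{c}$. The mean-pooled representations are therefore $\sx=\frac1N\sum_i \mathbf{c}=\mathbf{c}$ and likewise $\sy=\mathbf{c}$, for every pair of views and every image in the batch. Consequently $h_\psi(\sx)=h_\psi(\sy)$ for any projector $h_\psi$. Each summand of $\mathcal{L}_{\mathrm{MSE}}$, namely $\|h_\psi(\sx)-h_\psi(\sy)\|^2$ (or its normalized variant), vanishes. I will note that this holds for any $\psi$, so no condition on the projector is needed.

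\textbf{Step 2 (prediction term).} The context restriction is $f_\theta(\bx)|_{\mathcal{C}}=\mathbf{c}\,\mathbf{1}_{|\mathcal{C}|}^\top$ and the target token is $f_\theta(\by)^{(t)}=\mathbf{c}$. By the hypothesis on $g_\phi$, the prediction equals $\mathbf{c}$ for every positional code $m_t$. Hence every summand $D(\mathbf{c},\mathbf{c})=0$, and $\Lpred=0$. To make the hypothesis non-vacuous, I will remark that such a predictor exists in the architecture: for example, a predictor with a residual/identity path and zeroed block weights, or one with a bias $\mathbf{c}$ and zero weights, realizes it. If stop-gradient or EMA targets are used, they are also constant $\mathbf{c}$ at this point, so nothing changes.

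\textbf{Step 3 (conclusion).} Summing gives total value $0$, which matches the lower bound, so the configuration is a global minimizer for every $\alpha,\beta>0$; $\gamma=0$ removes the only term that could penalize it.

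The only potential obstacle is making precise the nonnegativity and zero-at-equality of the losses as actually defined in Section~\ref{sec:method}. If $\mathcal{L}_{\mathrm{MSE}}$ uses $\ell_2$-normalized embeddings, as in BYOL ($2-2\cos$), I would handle the case $h_\psi(\mathbf{c})=0$ by choosing $\mathbf{c}$ (or $\psi$) so that $h_\psi(\mathbf{c})\neq 0$, which keeps the cosine well defined and equal to $1$.
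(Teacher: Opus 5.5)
Your proposal is correct and matches the paper's proof: both observe that $\sx=\sy=\mathbf{c}$, so $\mathcal{L}_{\mathrm{MSE}}=0$ for any projector; both use the hypothesized predictor, realized as in your second example by zero weights and output bias $\mathbf{c}$, to get $\Lpred=0$; and both conclude global minimality from nonnegativity. One small correction: your first example, a residual path with zeroed blocks, would in general carry the mask token and positional embedding through to the output rather than $\mathbf{c}$, but this does not affect the proof because the theorem assumes such a predictor.
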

\begin{proof}[Proof sketch]
With $f_\theta$ constant both views map to the same point, so $\mathcal{L}_{\mathrm{MSE}} = 0$; a predictor with output bias $\mathbf{c}$ drives $\Lpred = 0$, attainable through the context branch alone since the stop-gradient blocks the target branch. Both terms being non-negative and zero, the solution is a global minimizer. Full proof in Appendix~\ref{appendix:collapse}.
\end{proof}

\begin{remark}{Self-limiting collapse resistance of the contrastive loss}{ntxent_partial}
Theorem~\ref{theorem:collapse} does not extend to $\Linv = \mathcal{L}_{\mathrm{NT\text{-}Xent}}$; with in-batch negatives the constant encoder is not a global minimizer. However, as representations homogenize toward a common direction, all pairwise cosine similarities converge to 1, the softmax denominator saturates at $K \cdot e^{1/\tau}$, and the gradient contribution from each negative shrinks as $\mathcal{O}(1/K)$. The total repulsive signal therefore scales as $\mathcal{O}(1)$ in the fully-collapsed limit rather than growing to prevent it; the contrastive term cannot guarantee a positive gradient sufficient to prevent gradual dimensional collapse at every step, unlike an explicit $\Lreg$ whose gradient is strictly positive at every Dirac mass (Appendix~\ref{appendix:reg_prevents}). Intuitively, once all negatives are equally hard the contrastive loss has no preferred direction in which to push representations apart, and the implicit anti-collapse signal stalls. Empirically, Table~\ref{tab:components} row~\textcolor{gray}{H} retains $51.6\%$ without explicit regularization but never matches row~\textcolor{gray}{J} ($55.0\%$) with it, and the effective rank in row~\textcolor{gray}{H} is batch-size-sensitive (Appendix~\ref{appendix:batchsize}).
\end{remark}

\begin{proposition}{Structural necessity of prediction}{necessity_pred}
Under the studied protocol, satisfying Definitions~\ref{def:obs} and~\ref{def:reg} without Definition~\ref{def:pred} yields representations that encode semantic content but receive no spatial training signal by construction: no term in $\alpha\Fproj + \gamma\Omega$ depends on the relative positions of patches, so the encoder is never required to make $g_\phi(f_\theta(\bx)|_\mathcal{C}, m_t)$ approximate $f_\theta(\by)^{(t)}$. This structural omission manifests as degraded patch retrieval accuracy (Table~\ref{tab:retrieval}) and lower linear-probe accuracy (Table~\ref{tab:components}, row~\textcolor{gray}{F} vs.\ row~\textcolor{gray}{J}).
\end{proposition}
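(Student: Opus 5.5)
The argument is a structural invariance argument on the objective $\alpha\Fproj + \gamma\Omega$. The empirical clause (Tables~\ref{tab:retrieval} and~\ref{tab:components}) is not something to prove; it is the observable consequence we point to. So the formal content is this: this objective carries no spatial training signal, meaning it is invariant under any permutation of patch positions. Consequently, its gradient with respect to the encoder carries no information about relative patch positions.

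\textbf{Step 1: the objective depends on patch tokens only through their mean.} Write the encoder output as $f_\theta(\bx) = [f_\theta(\bx)^{(1)},\dots,f_\theta(\bx)^{(N)}]$. By~\eqref{eq:Fproj}, $\Fproj$ depends on $f_\theta(\bx)$ and $f_\theta(\by)$ only through the mean-pooled vectors $\sx = \frac1N\sum_i f_\theta(\bx)^{(i)}$ and $\sy$. Definition~\ref{def:reg} stipulates that $\Omega$ depends only on the empirical batch distribution of $\bs_{y,n}$, which are again pooled vectors. Mean pooling is invariant under every permutation $\pi$ of patch indices. Hence for any $\pi$,
\[
\alpha\Fproj(f_\theta(\bx)\circ\pi, f_\theta(\by)\circ\pi') + \gamma\Omega = \alpha\Fproj(f_\theta(\bx), f_\theta(\by)) + \gamma\Omega,
\]
even with independent permutations $\pi,\pi'$ on the two views.

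\textbf{Step 2: the loss is blind to spatial arrangement.} By the chain rule, the gradient of the objective with respect to each token output is $\partial \mathcal{L}/\partial f_\theta(\bx)^{(i)} = \frac1N \nabla_{\sx}\mathcal{L}$. This is identical across all positions $i$. The training signal on each token is therefore independent of its position, and it is independent of the positional index $m_t$ in particular. Two encoders that differ only by a position-dependent rearrangement of patch features that preserves the mean attain equal loss.

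Now take the ideal set $\Theta_{\mathrm{pred}}$ of encoders for which some $g_\phi$ makes $\Epred$ small. Among the minimizers of $\alpha\Fproj+\gamma\Omega$, there are always spatially-scrambled variants that produce the same pooled representation. One example is an encoder whose token map is composed with a fixed, input-dependent permutation. Since the objective cannot distinguish these variants, it exerts no pressure toward $\Theta_{\mathrm{pred}}$. This makes precise the claim that ``the encoder is never required to make $g_\phi(f_\theta(\bx)|_\mathcal{C}, m_t)$ approximate $f_\theta(\by)^{(t)}$.''

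\textbf{Step 3 and main obstacle.} The hard part is realizability: showing that a scrambled variant is actually achievable within the encoder class, rather than merely existing as a function. A ViT with positional embeddings could in principle represent such a variant. However, we only need to show that the objective does not distinguish the variants, not that optimization finds any particular one. For that reason we state the result as an absence of signal rather than an impossibility of spatial structure. Spatial structure may still emerge implicitly from the architecture, which is why the degradation is a quantitative empirical claim. We then close by citing the patch-retrieval gap and the row~\textcolor{gray}{F} vs.\ row~\textcolor{gray}{J} comparison as the measured manifestation.
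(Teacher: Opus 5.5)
Your proposal is correct and follows essentially the paper's route. The paper justifies the proposition only by the by-construction observation that $\alpha\Fproj+\gamma\Omega$ touches patch tokens solely through the pooled $\sx,\sy$; this is equivalent to the uniform per-token gradient $\frac{1}{N}\partial\Linv/\partial\sx$ of Theorem~\ref{theorem:gradient_decomp}, and Table~\ref{tab:retrieval} and row~F versus row~J are cited as evidence, not proved. One caveat: your scrambling step is looser than needed, because permutations act trivially on spatially uniform token maps, which can already lie in $\Theta_{\mathrm{pred}}$ (the constant encoder does, by Theorem~\ref{theorem:collapse}). The cleaner formal content is that the objective factors through mean pooling and is therefore blind to every mean-preserving perturbation of the tokens.
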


\begin{proposition}{Structural necessity of observation}{necessity_obs}
Under the studied protocol, satisfying Definitions~\ref{def:pred} and~\ref{def:reg} without Definition~\ref{def:obs} yields representations that are spatially consistent within an image but receive no cross-view semantic alignment signal by construction: no term in $\beta\Fpred + \gamma\Omega$ enforces that $\sx$ and $\sy$ be close across augmented views. This structural omission manifests as the largest single accuracy drop in the ablation (Table~\ref{tab:components}, row~\textcolor{gray}{E} vs.\ row~\textcolor{gray}{J}, a gap of $13.4$ points).
\end{proposition}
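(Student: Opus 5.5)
The plan is to make the statement precise before arguing it. The objective without observation is $\beta\Lpred + \gamma\Lreg$, where $\Lpred$ is built from $\Fpred$ and $\Lreg$ from $\Omega$. Call the gradient signal this objective sends to the encoder output the ``cross-view alignment signal''. It is the component of $\partial(\beta\Lpred+\gamma\Lreg)/\partial f_\theta$ that moves $\sx$ toward $\sy$ when $\bx,\by$ are different augmented views of one image. The claim then reduces to two structural facts, each checkable term by term. First, $\Fpred$ compares $f_\theta(\by)^{(t)}$ with $g_\phi(f_\theta(\bx)|_\mathcal{C},m_t)$, where context and target are drawn from the \emph{same} view by masking. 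Second, $\Omega$ depends only on the empirical batch distribution $\hat p$ (Definition~\ref{def:reg}), and so it is invariant to any permutation pairing views.

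The key steps run in order. \textbf{(1)} Write out $\Lpred$ under the studied protocol and note that the context set $\mathcal{C}$ and the target index $t$ partition the patches of a single augmented image. The loss is therefore a function of one view at a time, $\Lpred=\sum_v \ell(f_\theta(v))$, with no term coupling two distinct augmentations. \textbf{(2)} For $\Omega$, use that it is a functional of $\hat p=\frac1B\sum_n\delta_{\bs_{y,n}}$ alone. It is unchanged if we relabel which embedding came from which view, so it carries no information about view correspondence. \textbf{(3)} Conclude that the whole objective is invariant under applying any view-dependent transformation to one view's representations, provided that transformation preserves each single-view prediction loss and the batch distribution. A concrete witness: let $\pi_v$ be a permutation or rotation of representation space, chosen independently per augmentation, acting equivariantly with a correspondingly transformed predictor. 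This leaves $\beta\Lpred+\gamma\Omega$ unchanged but makes $\|\sx-\sy\|$ arbitrary. Hence no minimizer of the objective is forced to satisfy Definition~\ref{def:obs}; the signal is absent by construction.

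\textbf{(4)} Finally, tie this to the empirical manifestation by citing the ablation Table~\ref{tab:components}, rows E versus J. This part is reported evidence, not proof, and I would state it as such. I would also note that the gap being the largest is an empirical observation at the studied scale.

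The main obstacle is step~(3). The encoder is shared across views, so a per-view transformation is not literally realizable by a single $f_\theta$. Strict non-implication therefore needs care. I would first weaken the claim to the gradient level, which is what ``receives no signal'' asserts. Compute $\nabla_{\sx}$ of the objective and show it contains no term involving $\sy$ from the other view. This follows directly from steps~(1) and~(2), and it avoids constructing an explicit counterexample encoder. The invariance witness would be kept only as an informal remark.
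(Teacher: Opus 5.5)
Your skeleton (a prediction loss computed one view at a time, a regularizer that sees only the batch distribution, and rows~E vs.\ J of Table~\ref{tab:components} as evidence) is the paper's argument. The paper gives no proof beyond the inspection built into the statement: no term of $\beta\Fpred+\gamma\Omega$ enforces closeness of $\sx$ and $\sy$. It adds the $13.4$-point gap and explicitly calls the result structural rather than optimization-theoretic.

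The gap is in the formal version you intend to fall back on. You claim the gradient of the objective with respect to $\sx$ ``contains no term involving $\sy$ from the other view.'' This does not follow from step~(2), and it is false for the regularizers as implemented. Each regularizer acts on the stacked batch $\mathbf{P}=[h_\psi(\sx);h_\psi(\sy)]$ (Appendix~\ref{appendix:objectives}). For VCReg, the derivative of $\mathrm{Var}(\mathbf{P}^{(:,d)})$ with respect to the row $h_\psi(\sx^{(n)})$ is proportional to that row's deviation, in coordinate $d$, from the mean of all $2B$ rows, and that mean contains $h_\psi(\sy^{(n)})$. The empirical characteristic function in SIGReg and the rank-sorted projections in RDMReg couple all rows in the same way. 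What permutation invariance actually gives you is weaker: $\sy^{(n)}$ enters only through symmetric batch statistics, with the same weight as every other row. That is the fact you need, but it is not the statement you wrote.

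You should drop the witness in step~(3) rather than keep it as a remark. The predictor is shared across views, not just the encoder. Applying different rotations or permutations to different rows also changes the empirical batch distribution, so $\Omega$ is not preserved. VCReg is not even invariant under a single global rotation, and RDMReg's rectified target is not rotation-invariant.

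The clean formalization is a relabeling symmetry of the data, not a transformation of the model. Step~(1) needs the per-view reading of $\Lpred$; this is licensed by the protocol sentence ``a context block is sampled per view,'' not by the notation of \Cref{eq:pred}, which draws context from $\bx_n$ and targets from $\by_n$. Under that reading, the batch objective $\beta\Lpred+\gamma\Lreg$ is unchanged under every re-pairing $(\bx_n,\by_n)\mapsto(\bx_n,\by_{\sigma(n)})$, with $\sigma$ a permutation of $\{1,\dots,B\}$. So the loss and its $\theta$-gradient on a batch of positive pairs coincide with those on the same $2B$ views treated as unrelated images. No term can single out a partner to pull toward, which is exactly what ``no cross-view alignment signal by construction'' means. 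It also makes clear that nothing is claimed about whether the learned encoder becomes invariant indirectly, matching the paper's caveat.
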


Propositions~\ref{proposition:necessity_pred} and~\ref{proposition:necessity_obs} are structural, not optimization-theoretic: they assert the omitted signal is never provided, not that no encoder could acquire the property indirectly. The alignment-uniformity decomposition of Wang \etal~\cite{wang2020understanding} shows the observation and regularization roles are entangled within a single contrastive loss. Our empirical validation is at a single small scale (Section~\ref{sec:experiments}); whether a larger encoder could acquire spatial structure from observation alone, or cross-view invariance from prediction alone, remains open.

\subsection{Relation to Prior Methods}
\label{subsec:taxonomy}

The same decomposition that defines the three principles also classifies prior work; each method is recovered by fixing which coefficients of \Cref{eq:unified} are non-zero.

\begin{proposition}{Unification of prior methods}{unification}
The decomposition of \Cref{eq:unified} recovers all major self-supervised methods as special cases. Setting $\beta = 0$ recovers invariance methods, with collapse prevented contrastively~\cite{oord2019cpc,he2020momentum,chen2020simclr}, through asymmetric dynamics~\cite{grill2020byol,chen2021simsiam}, or an explicit regularizer~\cite{bardes2022vicreg,zbontar2021barlow}. Setting $\alpha = 0$ recovers predictive methods: I-JEPA~\cite{assran2023ijepa} with a momentum encoder; LeJEPA~\cite{balestriero2025lejepa} and LpJEPA~\cite{kuang2026lpjepa} with explicit regularizers. With all three positive, C-JEPA~\cite{mo2024cjepa} and DINOv2~\cite{oquab2024dinov2}, whose KoLeo term is an explicit geometric regularizer, satisfy the three conditions but retain implicit mechanisms (momentum encoders, centering) alongside the regularizer. We prove this redundancy formally at convergence (Theorem~\ref{theorem:ema_collapse}) and confirm it empirically (Table~\ref{tab:momentum}); our proposed model in Section~\ref{sec:experiments} satisfies all three with no implicit mechanism.
\end{proposition}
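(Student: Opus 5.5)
The plan is a case-by-case reduction. For each cited method, I would exhibit explicit choices of $(\alpha,\beta,\gamma)$, the distance $d$ in \Cref{eq:Fproj}, the divergence $D$ and latent $\bz$ in \Cref{eq:Epred}, and the regularizer $\Omega$. Under these choices, \Cref{eq:unified} coincides with that method's published training objective, up to positive rescaling and terms constant in the trainable parameters. The proposition is a classification claim, so the proof is a finite verification organized by which coefficients vanish.

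\textbf{Step 1 ($\beta=0$).} Here $F_w=\alpha\Fproj+\gamma\Omega$.
\begin{itemize}
\item For SimCLR and MoCo, take $d$ to be the NT-Xent/InfoNCE energy and $\gamma=0$. The negatives enter through the contrastive normalization, consistent with Remark~\ref{remark:ntxent_partial}. CPC follows the same pattern once its context/target pairs are read as compatible views.
\item For BYOL and SimSiam, take $d$ to be the negative cosine between a predictor-head output and a stop-gradient target, with $\gamma=0$. The anti-collapse mechanism is then the asymmetric dynamics (stop-gradient, EMA), which lives outside the energy. I would state this explicitly so the unification is understood as a unification of energies, not of optimizers.
\item For VICReg, take $d$ to be the invariance MSE and let $\gamma\Omega$ be the variance plus covariance terms. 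I would check that this $\Omega$ depends only on $\hat p$, is positive at a Dirac mass (the hinge on the standard deviation is active there), and is minimized at full-rank covariance, so that Definition~\ref{def:reg} holds.
\item For Barlow Twins, split the cross-correlation loss: the diagonal term becomes $d$ and the redundancy-reduction term becomes $\Omega$. The catch is that $\Omega$ depends on the joint batch of both views. I would handle this by letting $\hat p$ range over the concatenated batch.
\end{itemize}

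\textbf{Step 2 ($\alpha=0$).} Here $F_w=\beta\Fpred+\gamma\Omega$.
\begin{itemize}
\item I-JEPA is $\gamma=0$, with $D$ the squared $\ell_2$ distance, $\bz=m_t$ the target positional tokens, and the target produced by an EMA encoder. That encoder is again an optimizer-side mechanism.
\item LeJEPA and LpJEPA are $\gamma>0$, with $\Omega$ their distribution-matching regularizers (SIGReg and its $\ell_p$ variant). For these I would verify the conditions of Definition~\ref{def:reg} as in Appendix~\ref{appendix:reg_prevents}.
\end{itemize}

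\textbf{Step 3 (all three positive).}
\begin{itemize}
\item For C-JEPA, the objective is the I-JEPA loss plus VICReg terms, so it decomposes directly.
\item For DINOv2, the DINO cross-entropy between teacher and student prototypes plays the role of $d$ in $\Fproj$. The iBOT masked-patch loss plays the role of $\Fpred$, with patch tokens as targets and mask positions as $\bz$. The KoLeo entropy estimator is $\Omega$: it is a function of $\hat p$, it diverges at a Dirac mass, and it is maximized in entropy, i.e.\ minimized as a loss, by spread-out configurations.
\end{itemize}
The remaining claim, that retaining the implicit mechanisms is redundant, is deferred to Theorem~\ref{theorem:ema_collapse} and is not proved here.

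\textbf{Main obstacle.} The hard part is making "recovers" precise for methods whose objectives are not literally additive in the form of \Cref{eq:unified}. Examples are DINO's centering and sharpening, the teacher targets in BYOL/DINO, and Barlow Twins' coupling of invariance and decorrelation inside one matrix. My approach is to fix one convention: \Cref{eq:unified} describes the loss evaluated with targets treated as constants under stop-gradient, while centering, EMA and predictor asymmetry are classified as implicit mechanisms outside the energy. With that convention fixed, each case reduces to matching terms. I would present the result as a table (Table~\ref{tab:taxonomy}) listing $(\alpha,\beta,\gamma,d,D,\Omega)$ per method, together with a one-line check for each entry.
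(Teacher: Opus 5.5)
Your overall route is the paper's. Its only support for this proposition is the per-method classification in Table~\ref{tab:taxonomy}, organized by which of $\alpha,\beta,\gamma$ vanish. EMA, stop-gradient, centering and contrastive repulsion are kept outside the energy as implicit mechanisms, and the redundancy claim is deferred to Theorem~\ref{theorem:ema_collapse} and Table~\ref{tab:momentum}. Where you go beyond the table, however, one step fails: the Barlow Twins split.

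\emph{The pairing problem.} The off-diagonal (redundancy-reduction) entries of the cross-correlation $\mathcal{C}_{ij}$ depend on which first-view embedding is paired with which second-view embedding. Permuting the second view within the batch changes $\mathcal{C}$ but leaves the empirical distribution of the concatenated batch unchanged. So this term is not a function of your concatenated $\hat p$, let alone of $\hat p$ over the $\bs_{y,n}$ as Definition~\ref{def:reg} requires. VICReg does not have this problem, because its variance and covariance terms are computed per branch, so your check there is fine.

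\emph{The anti-collapse pressure sits on the wrong side.} Barlow Twins standardizes each coordinate over the batch. At a Dirac mass the standardized embeddings therefore vanish (given the usual $\epsilon$; otherwise $\mathcal{C}$ is undefined), so $\mathcal{C}\equiv 0$. The redundancy-reduction term is then zero, and only the diagonal term $\sum_i(1-\mathcal{C}_{ii})^2$ is positive. Your $\Omega$ thus fails the strict-positivity clause of Definition~\ref{def:reg}. Your $d$ is also not a per-pair distance $d(h_\psi(\sx^{(n)}),h_\psi(\sy^{(n)}))$; it is a batch-level correlation that does the invariance and the variance work at once.

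The honest conclusion is that Barlow Twins entangles observation and regularization in a single non-separable term. This is why the paper's table records its $d$ as ``cross-corr.'' and its $\Omega$ as ``implicit'' rather than splitting it, while still ticking Reg. You should do the same, and state that filing Barlow Twins under ``explicit regularizer'' is a classification by convention, not a term-matching verification.

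A smaller gap is in your KoLeo check, which omits the second clause of Definition~\ref{def:reg}: ``spread-out'' does not mean full rank. For $B\le D$ points on the sphere, the KoLeo minimizer is a centered regular simplex, whose covariance has rank $B-1<D$. The clause can therefore hold only when the batch exceeds the dimension. The paper does not verify this for KoLeo either.
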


The complete taxonomy is given in Table~\ref{tab:taxonomy} (Appendix~\ref{appendix:taxonomy}). Methods that prevent collapse only through implicit mechanisms (momentum encoder, stop-gradient, contrastive repulsion, centering) do not satisfy Definition~\ref{def:reg} and give no guarantee at convergence (Theorem~\ref{theorem:ema_collapse}).

\section{An Energy Decomposition Satisfying All Three Principles}
\label{sec:method}

We instantiate \Cref{eq:unified} as three jointly trained objectives, $\Ltotal = \alpha \Linv + \beta \Lpred + \gamma \Lreg$. Because collapse prevention is delegated solely to $\Lreg$, both $\Linv$ and $\Lpred$ are free to focus on their own inductive biases without being redesigned to guard against degenerate solutions. We write $\sx, \sy \in \mathbb{R}^D$ for the mean-pooled encoder outputs.

\paragraph{\bf Invariance objective.} Instantiating Definition~\ref{def:obs} through $\Fproj$, we study two forms. The normalized temperature-scaled cross-entropy loss~\cite{chen2020simclr} aligns views with in-batch negatives,
\begin{equation}
    \mathcal{L}_{\mathrm{NT\text{-}Xent}} = -\frac{1}{B} \sum_{n=1}^{B} \log \frac{\exp(\mathrm{sim}(h_\psi(\sx^{(n)}), h_\psi(\sy^{(n)})) / \tau)}{\sum_{k \neq n} \exp(\mathrm{sim}(h_\psi(\sx^{(n)}), h_\psi(\sy^{(k)})) / \tau)},
    \label{eq:ntxent}
\end{equation}
with cosine similarity $\mathrm{sim}$ and temperature $\tau$; its negative repulsion is an implicit anti-collapse signal (Remark~\ref{remark:ntxent_partial}). The alternative is a negative-free alignment $\mathcal{L}_{\mathrm{MSE}} = \frac{1}{B}\sum_n \|h_\psi(\sx^{(n)}) - h_\psi(\sy^{(n)})\|_2^2$, which collapses trivially without $\Lreg$ (Theorem~\ref{theorem:collapse}) and serves as a clean diagnostic of each regularizer's sufficiency. Note that ``negative-free'' refers throughout to the absence of contrastive negatives in the invariance objective; it does not refer to prediction-only methods, which have no invariance objective at all.

\paragraph{\bf Predictive objective.} Instantiating Definition~\ref{def:pred} through $\Fpred$, we follow the I-JEPA masking strategy~\cite{assran2023ijepa}: a context block $\mathcal{C}$ is sampled per view and target blocks are masked, and the predictor maps context and mask tokens to target representations,
\begin{equation}
    \Lpred = \frac{1}{B \cdot |\mathcal{T}|} \sum_{n=1}^{B} \sum_{t \in \mathcal{T}} \left\| g_\phi(f_\theta(\bx_n)|_\mathcal{C},\, m_t) - \sg\!\left(\syn{n}{t}\right) \right\|_2^2,
    \label{eq:pred}
\end{equation}
where $\syn{n}{t} = f_\theta(\by_n)^{(t)}$ and $\sg(\cdot)$ is the stop-gradient blocking flow through the target branch, so all collapse prevention is delegated to $\Lreg$.

\paragraph{\bf Regularization objective.} Instantiating Definition~\ref{def:reg}, we study three regularizers on the batch distribution of projected representations: \emph{VCReg}~\cite{bardes2022vicreg} (variance and covariance terms), \emph{SIGReg}~\cite{balestriero2025lejepa} (matching an isotropic Gaussian via the Epps-Pulley statistic), and \emph{RDMReg}~\cite{kuang2026lpjepa} (matching a rectified generalized Gaussian via sliced Wasserstein distance). Each is strictly positive at the collapsed solution and therefore meets Definition~\ref{def:reg} (Appendix~\ref{appendix:reg_prevents}). Full definitions appear in Appendix~\ref{appendix:objectives}.

\subsection{Theoretical Guarantees}
\label{subsec:theory_main}

Two further properties justify the design: the invariance and prediction objectives do not interfere at the encoder output, and the momentum encoder common to prior work supplies no collapse guarantee beyond an explicit regularizer once that regularizer is present.

\begin{theorem}{Gradient complementarity of observation and prediction}{gradient_decomp}
Under a mean-pool projector architecture (where $h_\psi$ acts on the mean-pooled token $\sx$), the gradient of $\Linv$ with respect to each patch token is uniform across all $i \in [N]$: $\partial \Linv / \partial f_\theta(\bx)^{(i)} = \frac{1}{N}\,\partial \Linv / \partial \sx$. This holds for both $\Linv = \mathcal{L}_{\mathrm{MSE}}$ and $\Linv = \mathcal{L}_{\mathrm{NT\text{-}Xent}}$, since both act on patch tokens only through the pooled representation. The gradient of $\Lpred$ is zero for all masked tokens $i \notin \mathcal{C}$. Thus $\Linv$ provides a uniform global signal and $\Lpred$ a spatially local one, with no structural conflict at the encoder output.
\end{theorem}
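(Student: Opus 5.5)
The argument is a chain-rule computation carried out separately for each loss, followed by a short remark on what "no structural conflict" means. Fix one view $\bx$ with patch tokens $f_\theta(\bx)^{(i)}$, $i\in[N]$, and mean-pooled representation $\sx = \frac{1}{N}\sum_{i=1}^N f_\theta(\bx)^{(i)}$. The Jacobian of this pooling with respect to any single token is $\partial \sx/\partial f_\theta(\bx)^{(i)} = \frac{1}{N} I_D$, which does not depend on $i$.

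\textbf{Invariance term.} The key observation is that, under the mean-pool projector architecture, both $\mathcal{L}_{\mathrm{MSE}}$ and $\mathcal{L}_{\mathrm{NT\text{-}Xent}}$ in \Cref{eq:ntxent} depend on the tokens of $\bx$ only through $\sx$, via $h_\psi(\sx)$. That is, $\Linv = \Phi(\sx,\sy,\ldots)$ for some differentiable $\Phi$. The chain rule then gives
\begin{equation*}
\frac{\partial \Linv}{\partial f_\theta(\bx)^{(i)}} = \Bigl(\frac{\partial \sx}{\partial f_\theta(\bx)^{(i)}}\Bigr)^{\!\top} \frac{\partial \Linv}{\partial \sx} = \frac{1}{N}\frac{\partial \Linv}{\partial \sx}
\end{equation*}
for every $i$. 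This already establishes uniformity, so neither loss needs to be differentiated explicitly. Two points remain to check:
\begin{itemize}
\item For NT-Xent, $\sx^{(n)}$ also appears through the other samples' denominators if the loss is symmetrized. Every such occurrence still passes through $\sx^{(n)}$, so the total derivative is collected on $\sx^{(n)}$ first and then pulled back through the same $\frac{1}{N}I_D$.
\item The cosine normalization inside $\mathrm{sim}$ acts after pooling, so it does not break the factorization.
\end{itemize}
The same argument applies verbatim to the $\by$ branch.

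\textbf{Prediction term.} In \Cref{eq:pred}, the online dependence on $f_\theta(\bx)$ enters only through $f_\theta(\bx)|_\mathcal{C}$. The target $\syn{n}{t}$ sits under $\sg(\cdot)$ and contributes zero gradient. Hence for $i\notin\mathcal{C}$ the token $f_\theta(\bx)^{(i)}$ does not appear in $\Lpred$ at all, and $\partial \Lpred/\partial f_\theta(\bx)^{(i)} = 0$. For $i\in\mathcal{C}$ the gradient is $\frac{\partial g_\phi}{\partial f_\theta(\bx)^{(i)}}^{\!\top}$ applied to the residual, which is generally token-dependent. This is the "spatially local" signal.

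\textbf{Main obstacle: non-conflict claim.} Stated precisely, the total token gradient decomposes as $\frac{\alpha}{N}\partial\Linv/\partial\sx\,\mathbf{1} + \beta\,\mathbb{1}[i\in\mathcal{C}]\,\partial\Lpred/\partial f^{(i)}$. The first term lies in the subspace of token-constant perturbations; a non-uniform direction cannot change $\sx$ at all. Projecting onto the mean-zero (across tokens) subspace therefore isolates the part of $\Lpred$'s signal that $\Linv$ cannot see. I will phrase "no structural conflict" in exactly this sense: $\Linv$ acts only on the pooled mean component and is blind to the spatial, mean-zero component. I will not claim that the two gradients are orthogonal, which fails in general because the mean component of $\Lpred$'s gradient can oppose $\Linv$. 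I will also flag the assumption that the statement concerns the encoder output, not the shared parameters $\theta$, where interaction through $\partial f/\partial\theta$ can occur.
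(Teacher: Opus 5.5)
Your proposal is correct and follows essentially the same route as the paper's proof. You use the chain rule through the $\frac{1}{N}\mathbf{I}_D$ pooling Jacobian for both $\mathcal{L}_{\mathrm{MSE}}$ and $\mathcal{L}_{\mathrm{NT\text{-}Xent}}$, and the computational-graph-plus-stop-gradient argument showing $\partial \Lpred/\partial f_\theta(\bx)^{(i)} = 0$ for $i \notin \mathcal{C}$. Your subspace reading of ``no structural conflict'' is a sharper formulation than the paper's: $\Linv$ sees only the token-constant component, and the two fields are not orthogonal in general because $\Lpred$'s gradient has a nonzero mean component. The paper states this conclusion informally and defers parameter-space interaction to \Cref{remark:param_conflict}.
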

\begin{proof}[Proof sketch]
Both $\mathcal{L}_{\mathrm{MSE}}$ and $\mathcal{L}_{\mathrm{NT\text{-}Xent}}$ depend on each patch token only through the mean-pooled $\sx$ and the projector $h_\psi$, so $\partial\sx/\partial f_\theta(\bx)^{(i)} = \frac{1}{N}\mathbf{I}_D$ gives the uniform projection for both forms. $\Lpred$ receives only context tokens $f_\theta(\bx)|_\mathcal{C}$ as input to $g_\phi$; masked tokens do not appear in the computational graph, and the stop-gradient on $\syn{n}{t}$ blocks all flow through the target branch. Full proof in Appendix~\ref{appendix:grad}.
\end{proof}

\begin{remark}{Token-level versus parameter-level conflict}{param_conflict}
Theorem~\ref{theorem:gradient_decomp} establishes disjointness at the encoder output under a mean-pool projector: a uniform global field from $\Linv$ and a context-supported local field from $\Lpred$. It does not preclude interference in parameter space, where both fields propagate into the shared weights $\theta$. We read the theorem as ruling out structural conflict at the interface where the two signals are defined; parameter-space alignment remains an empirical question, directly testable by measuring the gradient cosine between the two objectives in $\theta$ across training.
\end{remark}

\begin{theorem}{Momentum encoder redundancy (conditional on convergence)}{ema_collapse}
\emph{Assumption:} the online encoder $\theta(t)$ converges to a fixed point $\theta^*$ under the given training objective. Under this assumption, in the continuous-time gradient flow, the momentum target encoder satisfies $\bar\theta(t) \to \theta^*$. If $\theta^*$ is collapsed (a failure mode that \Cref{theorem:collapse} and Appendix~\ref{appendix:reg_prevents} show cannot arise when $\gamma > 0$), the momentum encoder is also collapsed at convergence and provides no corrective signal. Note: this analysis models a decoupled flow; the full coupled dynamics with stop-gradient are more complex and convergence of $\theta(t)$ is not guaranteed in general.
\end{theorem}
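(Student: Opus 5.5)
Under the convergence assumption, the argument reduces to a statement about a linear ODE driven by a convergent input. The first step is to write the momentum update in continuous time. The discrete EMA rule $\bar\theta_{k+1} = m\,\bar\theta_k + (1-m)\,\theta_k$ with momentum $m\in(0,1)$ is modelled, as the statement prescribes, by the decoupled flow
\begin{equation}
    \dot{\bar\theta}(t) = \lambda\,\bigl(\theta(t) - \bar\theta(t)\bigr), \qquad \lambda > 0,
\end{equation}
where $\lambda$ is proportional to $1-m$ per unit time. Here $\theta(t)$ is treated as an exogenous trajectory: the stop-gradient means $\bar\theta$ never receives a gradient, so this ODE is the whole dynamics of $\bar\theta$. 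Any feedback of $\bar\theta$ into $\theta$ enters only through the assumption that $\theta(t)\to\theta^*$.

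Next I solve the ODE explicitly by variation of constants:
\begin{equation}
    \bar\theta(t) = e^{-\lambda t}\bar\theta(0) + \lambda\int_0^t e^{-\lambda(t-s)}\,\theta(s)\,ds .
\end{equation}
Subtracting $\theta^* = e^{-\lambda t}\theta^* + \lambda\int_0^t e^{-\lambda(t-s)}\theta^*\,ds$ gives
\begin{equation}
    \|\bar\theta(t)-\theta^*\| \le e^{-\lambda t}\|\bar\theta(0)-\theta^*\| + \lambda\int_0^t e^{-\lambda(t-s)}\|\theta(s)-\theta^*\|\,ds .
\end{equation}
The main step is showing that the integral term vanishes. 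This is the standard fact that convolution with an integrable kernel preserves limits. Given $\varepsilon>0$, choose $T$ with $\|\theta(s)-\theta^*\|<\varepsilon$ for $s\ge T$. On $[T,t]$ the integral is then at most $\varepsilon$, since $\lambda\int e^{-\lambda(t-s)}ds\le 1$. On $[0,T]$ it is at most $e^{-\lambda(t-T)}\sup_{[0,T]}\|\theta-\theta^*\|$, which tends to $0$; the supremum is finite because $\theta$ is a continuous trajectory. Hence $\limsup\|\bar\theta(t)-\theta^*\|\le\varepsilon$ for every $\varepsilon>0$, so $\bar\theta(t)\to\theta^*$. I expect this to be the only step needing care, and the $\varepsilon$–$T$ split handles it.

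For the collapse consequence, suppose $f_{\theta^*}$ is constant (or rank-deficient). The target branch is $f_{\bar\theta}$, and $f_{\bar\theta(t)}\to f_{\theta^*}$ pointwise whenever $f$ is continuous in its parameters, which holds for a ViT. So the targets $\syn{n}{t}$ converge to the same collapsed representations the online encoder produces. At the limit the target encoder therefore contains no information absent from $\theta^*$, and it cannot supply a gradient that pushes $\theta$ away from the collapsed point. Indeed, by \Cref{theorem:collapse} the constant encoder with $\bar\theta=\theta^*$ attains zero loss, so it is stationary.

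Finally, for the parenthetical remark, I would cite Appendix~\ref{appendix:reg_prevents}: when $\gamma>0$, $\Omega$ is strictly positive at every Dirac mass, so a collapsed $\theta^*$ is not a minimizer. I would close by noting again that the whole argument is conditional on convergence of $\theta(t)$ in the decoupled model, and that the coupled dynamics are not analyzed.
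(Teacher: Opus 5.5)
Your proposal is correct and follows the same route as the paper's proof. Both solve $\dot{\bar\theta} = (1-m)(\theta-\bar\theta)$ by variation of constants, control the convolution integral with the same $\varepsilon$--$T$ split, and read off the collapse consequence and the $\gamma>0$ caveat from the limit together with Appendix~\ref{appendix:reg_prevents}. Your bound on the early-time piece via $\sup_{[0,T]}$ (finite by continuity) and your explicit use of the continuity of $f$ in its parameters are minor tightenings of details the paper leaves implicit.
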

\begin{proof}[Proof sketch]
The average $\dot{\bar\theta} = (1-m)(\theta - \bar\theta)$ is a stable low-pass filter of $\theta(t)$; once $\theta(t) \to \theta^*$ it tracks it and $\bar\theta(t) \to \theta^*$. Full proof in Appendix~\ref{appendix:ema}.
\end{proof}

The explicit $\Lreg$ prevents the collapsed state from being a fixed point for any $\gamma > 0$ (Appendix~\ref{appendix:proofs}), giving a guarantee at every step. Theorem~\ref{theorem:ema_collapse} shows the momentum encoder cannot supply a collapse guarantee beyond the explicit regularizer, but leaves open a transient stabilizing role of the kind analyzed by Tian \etal~\cite{tian2021understanding} for negative-free methods. Table~\ref{tab:momentum} reflects exactly this division of labor: all three target-encoder choices reach a healthy asymptote whose location is set by the regularizer, while the momentum variant retains a modest accuracy edge attributable to smoothing of the prediction targets during training.

\section{Related Work}
\label{sec:related}

\paragraph{\bf Invariance and masked modeling.} Contrastive methods~\cite{oord2019cpc,he2020momentum,chen2020simclr,giakoumoglou2024synco,giakoumoglou2025cluster} and their negative-free~\cite{grill2020byol,chen2021simsiam} and regularized~\cite{bardes2022vicreg,zbontar2021barlow} variants satisfy Definition~\ref{def:obs} but not Definition~\ref{def:pred}, discarding spatial structure through global pooling. Masked image modeling reconstructs masked content in pixel or token space~\cite{bao2021beit,he2022mae,baevski2022data2vec}, training no invariance objective. Self-distillation methods~\cite{caron2021dino,zhou2022ibot,oquab2024dinov2} combine view-level invariance with latent prediction of masked tokens; iBOT relies on implicit collapse prevention, while DINOv2's KoLeo term satisfies Definition~\ref{def:reg} (Table~\ref{tab:taxonomy}). Wang \etal~\cite{wang2020understanding} decompose the contrastive loss into alignment and uniformity, anticipating the observation-regularization split; Garrido \etal~\cite{garrido2023duality} prove a contrastive-covariance duality predicting the regularizer interchangeability in Table~\ref{tab:reg}; Garrido \etal~\cite{garrido2023rankme} establish effective rank as a downstream predictor whose failure modes we also observe.

\paragraph{\bf Joint-Embedding Predictive Architectures.} I-JEPA~\cite{assran2023ijepa} instantiates the predictor energy with a momentum encoder for collapse prevention, extended to video by V-JEPA~\cite{bardes2024vjepa,assran2025vjepa2}. LeJEPA~\cite{balestriero2025lejepa} and LpJEPA~\cite{kuang2026lpjepa} replace the momentum encoder with explicit regularizers, satisfying Definitions~\ref{def:pred} and~\ref{def:reg} but not Definition~\ref{def:obs}. Closest to ours, C-JEPA~\cite{mo2024cjepa} augments I-JEPA with VICReg terms to satisfy all three; relative to it we eliminate the momentum encoder entirely (Theorem~\ref{theorem:ema_collapse}), treat the three objectives as fully modular, and ground the decomposition in the formal principles of Section~\ref{sec:conditions}. Tian \etal~\cite{tian2021understanding} analyze the transient dynamics by which stop-gradient and momentum prevent collapse in negative-free methods; Theorem~\ref{theorem:ema_collapse} is complementary, characterizing the fixed point rather than the trajectory.

\section{Experiments}
\label{sec:experiments}

\paragraph{\bf Setup.} We pre-train a ViT-Tiny encoder (embedding dimension 192, depth 12, patch size 8) on the unlabeled split of STL-10~\cite{coates2011stl10} at $96{\times}96$, giving $144$ patch tokens. The projector maps the pooled representation through width 1024 to dimension 256; the predictor is a narrow ViT (embedding dimension 96, depth 4). Masking follows I-JEPA~\cite{assran2023ijepa}: one context block (scale $[0.85,1.0]$) and four target blocks (scale $[0.15,0.2]$). We train 200 epochs with AdamW, batch size 512, and a cosine-decayed peak learning rate $5{\times}10^{-4}$, evaluating by linear probe and patch-level retrieval (\Cref{subsec:retrieval}). We report the effective rank of the projector-output covariance as a scalar measure of non-degeneracy. All results are means over 5 random seeds; full hyperparameters appear in Appendix~\ref{appendix:hyperparams}.

\begin{table}[t]
    \small
    \centering
    \caption{\textbf{Subset ablation isolating the contribution of each principle.} Each row instantiates a strict subset of $\{\Linv, \Lpred, \Lreg\}$, verifying Propositions~\ref{proposition:necessity_reg}, \ref{proposition:necessity_pred}, and \ref{proposition:necessity_obs} and Theorem~\ref{theorem:collapse} and Remark~\ref{remark:ntxent_partial}. Our proposed model (row~\textcolor{gray}{J}) is the full contrastive configuration and serves as the reference. Row~\textcolor{gray}{G} (MSE + JEPA, no regularizer) collapses to low effective rank despite satisfying two principles, directly verifying Theorem~\ref{theorem:collapse}; the low effective rank of row~\textcolor{gray}{I} despite explicit regularization is discussed in Section~\ref{subsec:mse_anomaly}.}
    \label{tab:components}
    \begin{tabular}{clllcc}
        \toprule
         & $\Linv$ & $\Lpred$ & $\Lreg$ & Effective Rank & Linear Probe \\
        \midrule
        \textcolor{gray}{A} & MSE     & none & none    & 20.0  & 17.3 \\
        \textcolor{gray}{B} & NT-Xent & none & none    & 166.0 & 41.8 \\
        \textcolor{gray}{C} & none    & JEPA & none    & 15.9  & 42.1 \\
        \textcolor{gray}{D} & none    & none & SIGReg  & 7.3   & 21.2 \\
        \midrule
        \textcolor{gray}{E} & none    & JEPA & SIGReg  & 54.3  & 41.6 \\
        \textcolor{gray}{F} & NT-Xent & none & SIGReg  & 89.3  & 49.5 \\
        \textcolor{gray}{G} & MSE     & JEPA & none    & 164.5 & 38.7 \\
        \textcolor{gray}{H} & NT-Xent & JEPA & none    & 167.7 & 51.6 \\
        \textcolor{gray}{I} & MSE     & JEPA & SIGReg  & 9.4   & 49.2 \\
        \rowcolor{gray!12}
        \textcolor{gray}{J} & NT-Xent & JEPA & SIGReg  & 87.5  & \textbf{55.0} \\
        \bottomrule
    \end{tabular}
\end{table}

\begin{figure}[!t]
    \centering
    \includegraphics[width=\linewidth]{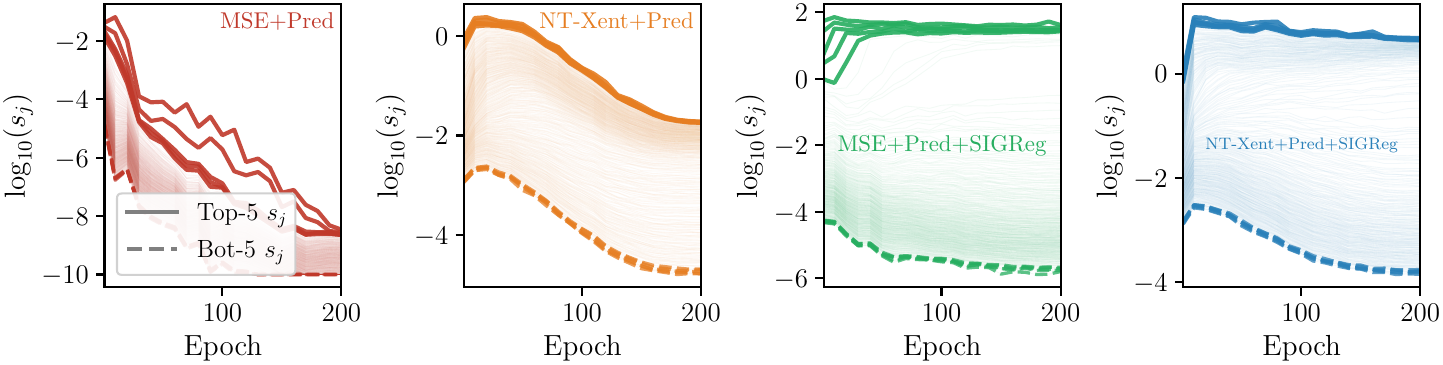}
    \caption{\textbf{Eigenspectrum of the projector-output covariance, verifying Theorem~\ref{theorem:collapse}.} Negative-free alignment with prediction but no regularization (MSE, no regularization) collapses to a few dominant eigenvalues, consistent with the optimization collapse of Theorem~\ref{theorem:collapse}. The contrastive variant (NT-Xent, no regularization) exhibits partial dimensional resistance, consistent with Remark~\ref{remark:ntxent_partial}, but does not reach the flat spectrum achieved by an explicit regularizer.}
    \label{fig:eigenspectrum}
\end{figure}

\begin{wrapfigure}{r}{0.4\textwidth}
    \centering
    \includegraphics[width=0.4\textwidth]{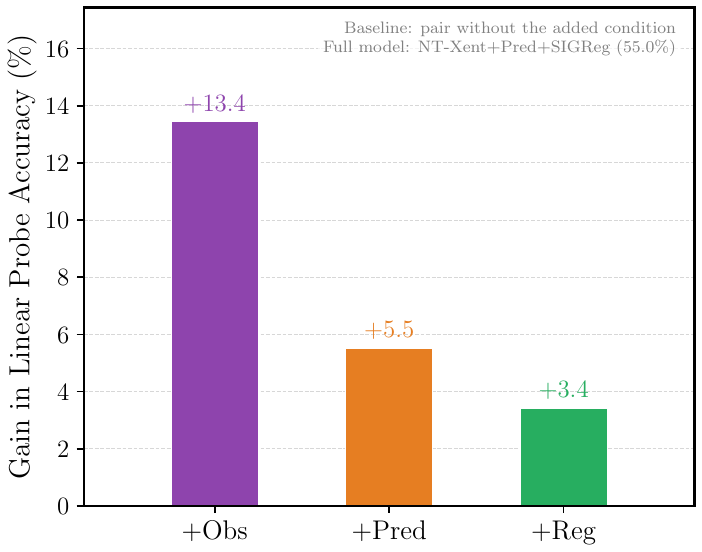}
    \caption{\textbf{Marginal gain in linear-probe accuracy from adding each principle, verifying Propositions~\ref{proposition:necessity_reg}, \ref{proposition:necessity_pred}, and \ref{proposition:necessity_obs}.} Each bar is the accuracy gained by adding one principle to the pair that omits it, relative to our proposed model (row~\textcolor{gray}{J}, $55.0\%$). No principle is redundant at this scale, and observation contributes the largest gain.}
    \label{fig:marginal_gain}
    \vspace{-15pt}
\end{wrapfigure}

\paragraph{\bf Collapse without regularization (Theorem~\ref{theorem:collapse}).} The direct signature of Theorem~\ref{theorem:collapse} is a degenerate representation spectrum (Figure~\ref{fig:eigenspectrum}). With the negative-free alignment and no regularizer, the projector-output covariance is dominated by a few leading eigenvalues (dimensional collapse), while adding an explicit regularizer flattens it. The contrastive alignment without a regularizer sits between the two, reflecting the self-limiting partial resistance of Remark~\ref{remark:ntxent_partial}: the NT-Xent term raises effective rank above the collapsed baseline but does not stabilize it to the level achieved by an explicit $\Lreg$. The linear-probe consequence is read from Table~\ref{tab:components}: the negative-free configuration without regularization (row~\textcolor{gray}{G}) reaches only $38.7\%$, over ten points below the regularized version (row~\textcolor{gray}{I}, $49.2\%$), while the contrastive variant (row~\textcolor{gray}{H}) retains $51.6\%$, the implicit resistance of the contrastive term standing in for an explicit regularizer at this scale. The same separation holds dynamically in effective rank (Appendix~\ref{appendix:effrank}).

\paragraph{\bf Necessity of each principle (Propositions~\ref{proposition:necessity_reg}, \ref{proposition:necessity_pred}, and \ref{proposition:necessity_obs}).} Against our proposed model (row~\textcolor{gray}{J}, $55.0\%$), removing regularization (row~\textcolor{gray}{H}) costs $3.4$ points, removing prediction (row~\textcolor{gray}{F}) $5.5$ points, and removing observation (row~\textcolor{gray}{E}) $13.4$ points, the largest single drop. Every single-principle configuration (rows~\textcolor{gray}{A}--\textcolor{gray}{D}) lands between $17.3\%$ and $42.1\%$; at this scale no pair substitutes for the third, and the ordering matches the structural argument: observation, which alone carries augmentation invariance, is the most costly to remove. Row~\textcolor{gray}{C} (prediction only) achieves $42.1\%$ despite low effective rank ($15.9$), illustrating the distinction between dimensional collapse and semantic impoverishment from Section~\ref{sec:introduction}: prediction alone yields non-trivial semantic content through latent target learning, but the low effective rank signals an impoverished representation geometry that limits transfer. The smallest gap (removing regularization) is consistent with Remark~\ref{remark:ntxent_partial}: contrastive repulsion partially compensates for absent explicit $\Lreg$ at this scale. Per-principle gains and linear-probe trajectories appear in Appendix~\ref{appendix:necessity}.

\paragraph{\bf Patch-level retrieval (Proposition~\ref{proposition:necessity_pred}).}
\label{subsec:retrieval}
To directly test the spatial consequence of the prediction principle, we evaluate patch-level nearest-neighbour retrieval. For each test image we extract the $144$ patch tokens from the frozen encoder, identify the $k$ nearest-neighbour patches in the training set by cosine similarity, and measure the proportion of retrieved patches falling within the same spatial quadrant as the query patch (\emph{spatial recall@k}). The results are shown in Table~\ref{tab:retrieval}. Removing prediction (row~\textcolor{gray}{F}) causes a $3.1$-point drop in spatial recall@5 relative to the full model (row~\textcolor{gray}{J}), directly confirming the structural argument of Proposition~\ref{proposition:necessity_pred}: without a spatial training signal the encoder does not learn to distinguish patch positions. Removing observation (row~\textcolor{gray}{E}) causes a smaller drop of $0.6$ points at recall@5, consistent with prediction being the primary driver of spatial structure and observation playing a complementary rather than conflicting role. The full model (row~\textcolor{gray}{J}) achieves the best spatial recall across all $k$.

\begin{table}[t]
    \small
    \centering
    \caption{\textbf{Patch-level nearest-neighbour retrieval, verifying Proposition~\ref{proposition:necessity_pred}.} Spatial recall@$k$ measures the proportion of retrieved patches falling within the same spatial quadrant as the query. Removing prediction (row~\textcolor{gray}{F}) causes the largest drop, confirming the structural argument that without $\Lpred$ the encoder receives no spatial training signal.}
    \label{tab:retrieval}
    \begin{tabular}{clllccc}
        \toprule
         & $\Linv$ & $\Lpred$ & $\Lreg$ & Recall@1 & Recall@5 & Recall@10 \\
        \midrule
        \textcolor{gray}{E} & none    & JEPA & SIGReg  & 48.2 & 92.4 & 98.2 \\
        \textcolor{gray}{F} & NT-Xent & none & SIGReg  & 43.9 & 89.9 & 98.0 \\
        \rowcolor{gray!12}
        \textcolor{gray}{J} & NT-Xent & JEPA & SIGReg  & \textbf{49.9} & \textbf{93.0} & \textbf{98.9} \\
        \bottomrule
    \end{tabular}
\end{table}

\paragraph{\bf Invariance objective and regularizer interaction.}
\label{subsec:mse_anomaly}
Row~\textcolor{gray}{I} achieves $49.2\%$ at effective rank $9.4$, far below row~\textcolor{gray}{J} ($87.5$) despite both having an explicit regularizer. We attribute this to a gradient-magnitude imbalance: the MSE loss collapses more aggressively than SIGReg can counteract at $\gamma = 0.05$. The regularizer is strictly positive at the collapsed solution (Lemma~\ref{lemma:sigreg_rdmreg_positive}) and prevents optimization collapse per Theorem~\ref{theorem:collapse}, but the effective rank trajectory (Appendix~\ref{appendix:effrank}) shows SIGReg stabilizing at low rank when paired with MSE. Increasing $\gamma$ raises effective rank toward the contrastive-regime values but at a cost to accuracy, consistent with SIGReg's isotropic-Gaussian target being a poor match for the natural representation geometry. We therefore read row~\textcolor{gray}{I} not as a failure of the regularization principle but as evidence that the invariance objective choice affects the regularizer's effective operating point; the contrastive term's implicit repulsion complements rather than substitutes for explicit regularization.

\paragraph{\bf Gradient complementarity (Theorem~\ref{theorem:gradient_decomp}).} Figure~\ref{fig:grad} measures the per-patch gradient norm of each objective at the encoder output across checkpoints. The invariance gradient is uniformly distributed across patch tokens (the contrastive objective acts on the pooled representation), and the predictive gradient is concentrated on context positions and vanishes at masked targets (the stop-gradient eliminates flow through the target branch). The contrast is stable across all checkpoints, confirming the two objectives act on disjoint aspects of the representation at the encoder output rather than competing; whether they remain aligned in parameter space (Remark~\ref{remark:param_conflict}) is a separate question we do not resolve here.

\begin{figure}[t]
    \centering
    \includegraphics[width=\linewidth]{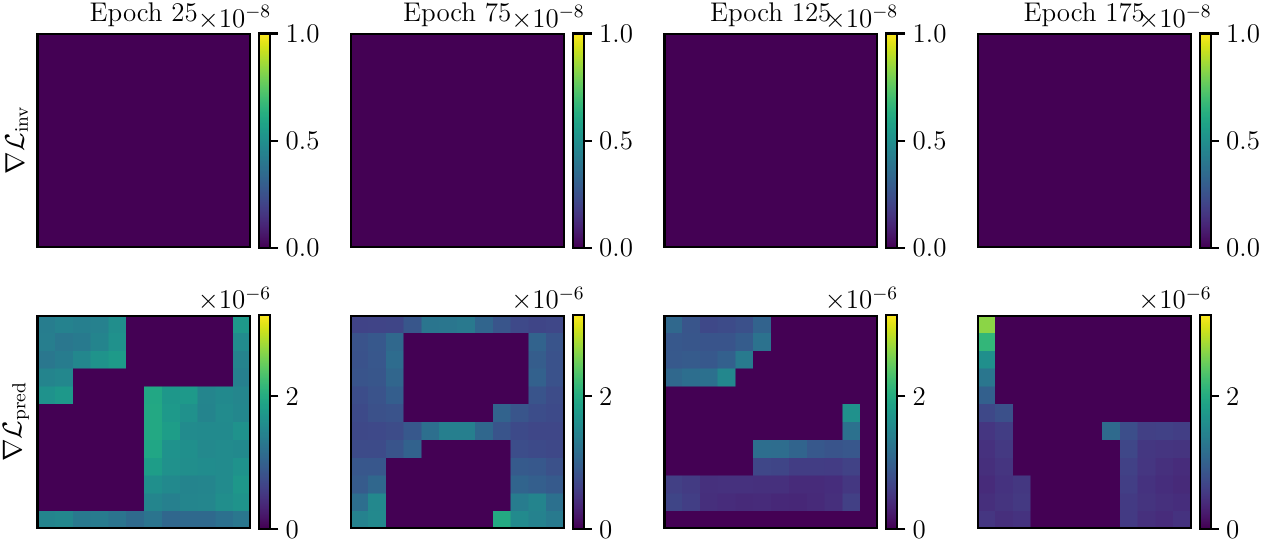}
    \caption{\textbf{Per-patch gradient norm at the encoder output, verifying Theorem~\ref{theorem:gradient_decomp}.} Top row: invariance gradient, uniform across patch tokens. Bottom row: predictive gradient, concentrated on visible context patches and vanishing at masked targets. Columns are training checkpoints; the contrast is stable throughout training.}
    \label{fig:grad}
\end{figure}

\paragraph{\bf Momentum encoder redundancy (Theorem~\ref{theorem:ema_collapse}).} Figure~\ref{fig:momentum}(a) confirms the convergence: the parameter distance between online and momentum encoders rises during early learning, then decays monotonically toward zero. The consequence is shown in Figure~\ref{fig:momentum}(b) and Table~\ref{tab:momentum}: the momentum encoder (row~\textcolor{gray}{B}) retains a modest edge over the stop-gradient baseline ($57.5\%$ versus $55.0\%$), comparable in size to the cost of removing regularization altogether, which we attribute to transient target smoothing rather than collapse prevention; removing the stop-gradient (row~\textcolor{gray}{C}) sends the model through a collapse trough early in training before the explicit regularizer drives full recovery to $55.9\%$. The shared asymptote confirms that the fixed point is set by the regularizer, while the target-encoder mechanism only stabilizes the path to it, consistent with Theorem~\ref{theorem:ema_collapse} and the transient analysis of Tian \etal~\cite{tian2021understanding}.

\begin{figure}[t]
    \centering
    \begin{subfigure}[b]{0.45\textwidth}
        \centering
        \includegraphics[width=\linewidth]{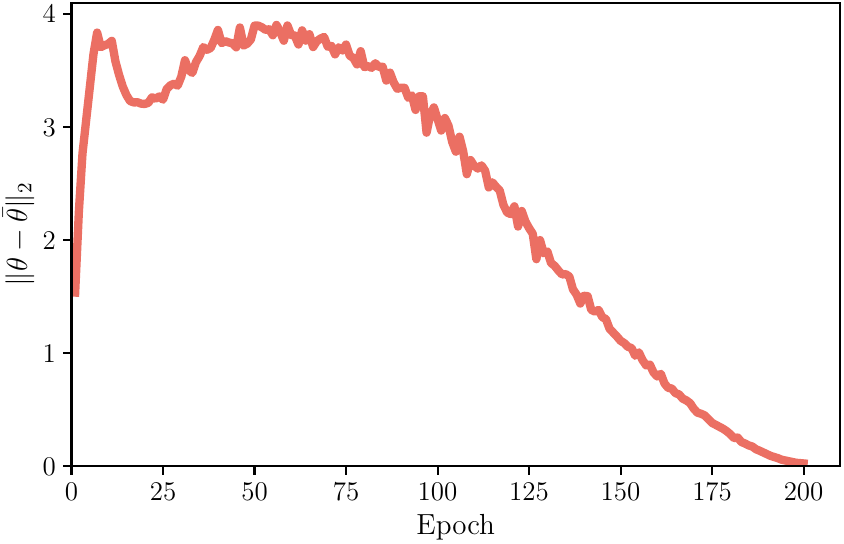}
        \caption{Online and mom. param.\ distance}
        \label{fig:ema_distance}
    \end{subfigure}
    \hspace{0.04\textwidth}
    \begin{subfigure}[b]{0.45\textwidth}
        \centering
        \includegraphics[width=\linewidth]{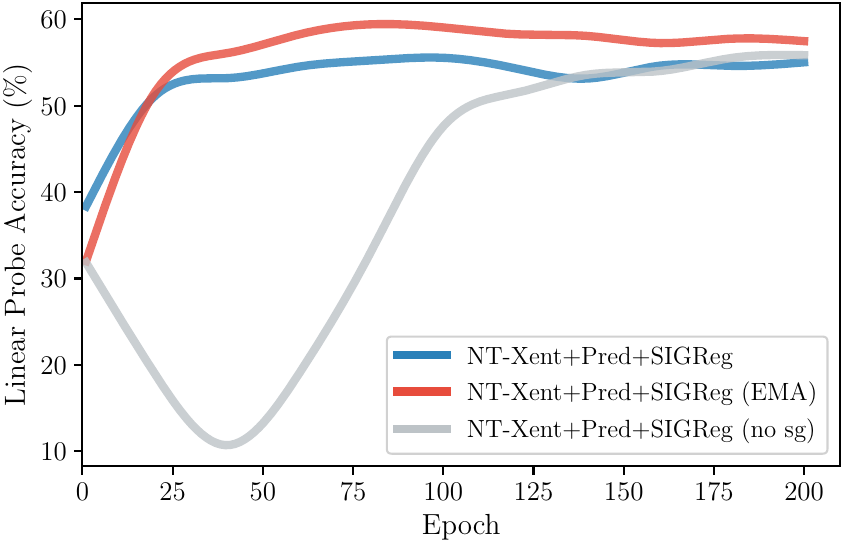}
        \caption{Linear probe, three target encoders}
        \label{fig:ema}
    \end{subfigure}
    \caption{\textbf{Momentum-encoder redundancy, verifying Theorem~\ref{theorem:ema_collapse}.} (a)~The online and momentum parameter distance rises during early learning, then decays toward zero; the two converge to the same fixed point, consistent with the theorem's convergence assumption. (b)~Stop-gradient and momentum reach comparable accuracy; removing the stop-gradient produces a collapse trough, after which the regularizer drives full recovery to the same asymptote.}
    \label{fig:momentum}
\end{figure}

\begin{table}[t]
    \small
    \centering
    \caption{\textbf{Target-encoder ablation showing momentum-encoder redundancy.} Contrastive alignment with isotropic-Gaussian regularization, verifying Theorem~\ref{theorem:ema_collapse}; all three target-encoder choices reach a comparable asymptote, with the momentum variant retaining a modest edge attributable to transient target smoothing rather than collapse prevention.}
    \label{tab:momentum}
    \begin{tabular}{clcc}
        \toprule
         & Target encoder & Effective Rank & Linear Probe \\
        \midrule
        \rowcolor{gray!12}
        \textcolor{gray}{A} & Stop-gradient        & 87.5 & 55.0 \\
        \textcolor{gray}{B} & Momentum             & 83.2 & \textbf{57.5} \\
        \textcolor{gray}{C} & None (full gradient) & 89.9 & 55.9 \\
        \bottomrule
    \end{tabular}
\end{table}

\paragraph{\bf Interchangeability of regularizers.} With prediction and contrastive alignment fixed, SIGReg and RDMReg are interchangeable ($55.0\%$ and $55.1\%$, training curves in Appendix~\ref{appendix:reg}), consistent with the contrastive-regularized duality of Garrido \etal~\cite{garrido2023duality}. VCReg lags by $\sim$4 points despite the highest effective rank; we attribute this to gradient conflict with $\Lpred$, since the covariance penalty targets off-diagonal correlations that the predictor actively induces in the encoder output. Negative-free alignment trails every contrastive variant by at least $5.8$ points, confirming that the contrastive term's implicit anti-collapse contribution is complementary to explicit regularization. Effective rank is a collapse diagnostic, not a quality score~\cite{garrido2023rankme}.

\begin{table}[t]
    \small
    \centering
    \caption{\textbf{Regularizer comparison with prediction enabled.} The isotropic-Gaussian and sliced-Wasserstein regularizers are interchangeable; the variance-covariance regularizer attains the highest effective rank but lower accuracy, attributable to gradient conflict with the prediction objective.}
    \label{tab:reg}
    \begin{tabular}{lllcc}
        \toprule
        $\Linv$ & $\Lpred$ & $\Lreg$ & Effective Rank & Linear Probe \\
        \midrule
        MSE     & JEPA & SIGReg  & 9.4   & 49.2 \\
        \rowcolor{gray!12}
        NT-Xent & JEPA & SIGReg  & 87.5  & 55.0 \\
        NT-Xent & JEPA & VCReg   & 248.4 & 51.1 \\
        NT-Xent & JEPA & RDMReg  & 109.8 & \textbf{55.1} \\
        \bottomrule
    \end{tabular}
\end{table}

\paragraph{\bf Discussion.} These are validation, not benchmark claims; ViT-Tiny on STL-10 is not comparable to large-scale pre-training, and linear-probe figures should be read only against one another. Within that scope every prediction is borne out: collapse as a degenerate spectrum, necessity as ordered accuracy and spatial recall drops, gradient complementarity in the per-patch gradients, and momentum redundancy as converging parameter distance. Finite-step optimization stops short of literal rank-one collapse, and the momentum variant keeps a modest edge the fixed-point analysis does not model. The measurements are structural: each principle contributes a separable, non-substitutable signal, and prior methods' implicit mechanisms reappear as counterparts of the explicit regularizer. Open questions: whether the ordering and regularizer interchangeability persist at scale; whether a dense probe would turn the structural argument for prediction into a measured result; and whether the gradient-conflict account of VCReg holds for other regularizers and architectures.

\section{Conclusion}
\label{sec:conclusion}

We formalized self-supervised representation learning as energy minimization over three non-overlapping principles: observation, prediction, and regularization. Regularization is optimization-theoretically necessary under negative-free alignment; contrastive resistance is self-limiting by gradient decay; prediction and observation are structurally necessary by construction. At the scale we study no pair substitutes for the third, as validated by linear-probe accuracy, patch retrieval, and gradient heatmaps. The projector and predictor realize observation and prediction architecturally; the two objectives are gradient-complementary at the encoder output under a mean-pool projector; and the momentum encoder converges to the online encoder's fixed point, offering no collapse guarantee beyond an explicit regularizer. The same decomposition recovers every major self-supervised method as a special case, differing only in which principles it satisfies. The clearest direction for future work is scale, and extending the decomposition to video and vision-language settings~\cite{bardes2024vjepa,assran2025vjepa2}.

\section*{Acknowledgements}

We acknowledge the computational resources and support provided by the Imperial College Research Computing Service (\url{http://doi.org/10.14469/hpc/2232}), which enabled our experiments.

%
%
\bibliographystyle{splncs04}
\bibliography{main}

\clearpage
\appendix
\section{Energy-Based Training Strategies}
\label{appendix:ebm}

We recall the two standard strategies for training an EBM. Making $F_w(\bx, \by)$ low for a training pair is straightforward; the difficulty is preventing the low-energy region from expanding to cover all of $\mathcal{Y}$.

\paragraph{\bf Contrastive training.} A contrastive objective pushes the energy of training pairs down while pushing the energy of contrastive samples up~\cite{lecun2022path}. A simple instance is the hinge loss $\mathbb{E}[ F_w(\bx,\by) + \max(0, m - F_w(\bx,\hat{\by}))]$ with margin $m > 0$; a more expressive instance is InfoNCE~\cite{oord2019cpc}, using $K$ negatives through a softmax ranking
\begin{equation}
    \min_w \; -\mathbb{E}\!\left[ \log \frac{e^{-F_w(\bx,\by)/\tau}}{e^{-F_w(\bx,\by)/\tau} + \sum_{k=1}^{K} e^{-F_w(\bx,\hat{\by}_k)/\tau}} \right],
    \label{eq:infonce}
\end{equation}
with temperature $\tau > 0$. Contrastive methods subsume Siamese networks~\cite{chopra2005learning}, MoCo~\cite{he2020momentum}, SimCLR~\cite{chen2020simclr}, and masked autoencoders~\cite{he2022mae}, differing only in how negatives are generated~\cite{lecun2022path,giakoumoglou2024rrd}.

\paragraph{\bf Regularized training.} Regularized methods avoid explicit negatives by minimizing the low-energy volume directly~\cite{lecun2022path}, adding a term $\Omega(F_w)$ that shrink-wraps the low-energy region around the data manifold: $\min_w \mathbb{E}[ F_w(\bx,\by)] + \Omega(F_w)$. Variance-covariance penalties~\cite{bardes2022vicreg, zbontar2021barlow}, whitening~\cite{ermolov2021whitening}, and distribution-matching objectives~\cite{balestriero2025lejepa, kuang2026lpjepa} impose qualitatively different geometric constraints.

\section{Architecture and Gradient Flows}
\label{appendix:architecture}

\Cref{fig:architecture} illustrates how the three principles of \Cref{eq:unified} map onto concrete architectural components and their respective gradient flows through the shared encoder $f_\theta$. The left branch realizes the observation principle (\seenoevil): both augmented views $\bx$ and $\by$ pass through $f_\theta$, their patch tokens are mean-pooled to $\sx$ and $\sy$, and the projector $h_\psi$ maps these to the space where $\Linv$ is computed. Because $h_\psi$ acts on the pooled token, the resulting gradient at the encoder output is uniform across all $N$ patch positions, $\frac{1}{N} \frac{\partial \Linv}{\partial \sx}$, as proved in \Cref{theorem:gradient_decomp}. The right branch realizes the prediction principle (\hearnoevil): context tokens $f_\theta(\bx)|_\mathcal{C}$ and mask token $m_t$ are fed to the predictor $g_\phi$, which regresses the target representation $f_\theta(\by)^{(t)}$; the stop-gradient $\sg(\cdot)$ blocks all gradient flow through the target branch, so $\Lpred$ drives the encoder only through the context tokens. The resulting gradient is spatially concentrated on visible context patches and zero at masked positions, the complementary local signal identified in \Cref{theorem:gradient_decomp}. The bottom strip realizes the regularization principle (\speaknoevil): the batch matrix $\mathbf{P} = [h_\psi(\sx); h_\psi(\sy)]$ is passed to $\Omega$, which produces a gradient that is strictly positive at every Dirac mass regardless of the state of $\Linv$ or $\Lpred$ (Lemmas~\ref{lemma:var_collapse} and~\ref{lemma:sigreg_rdmreg_positive}). The three gradient fields enter $f_\theta$ simultaneously and are structurally non-conflicting at the encoder output: the blue uniform field and the orange spatially-local field act on disjoint aspects of the representation, while the green batch-distribution field provides a collapse-prevention signal independent of both. Parameter-space alignment of the three fields is a separate empirical question addressed in \Cref{remark:param_conflict}.

\begin{figure}[h]
    \centering
    \includegraphics[width=\linewidth]{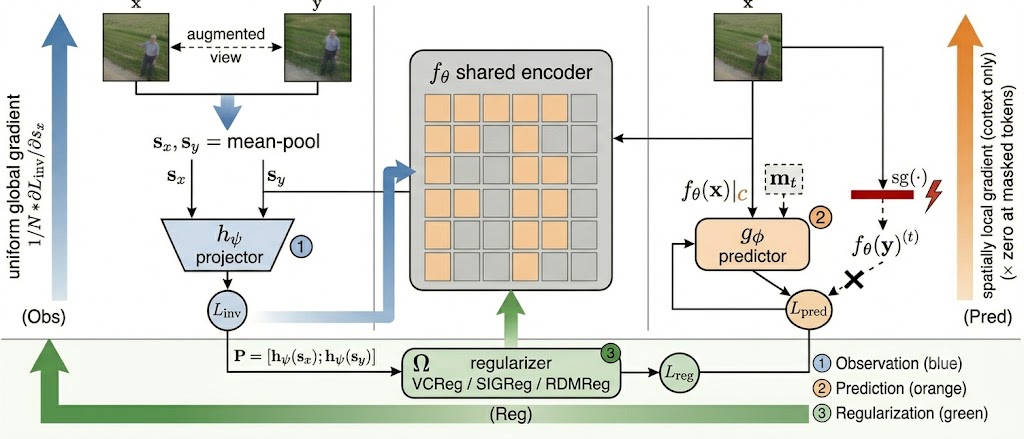}
    \caption{\textbf{Architecture and gradient flows for the three-principle decomposition.} The projector branch $h_\psi$ realizes the observation principle (\seenoevil), receiving a uniform global gradient from $\Linv$ through the mean-pooled representations $\sx, \sy$. The predictor $g_\phi$ realizes the prediction principle (\hearnoevil), receiving a spatially local gradient from $\Lpred$ concentrated on visible context patches; the stop-gradient blocks flow through the target branch. The regularizer $\Omega$ realizes the regularization principle (\speaknoevil), acting on the batch distribution of projected representations and providing a gradient at every step regardless of the invariance objective. The encoder $f_\theta$ (shared, gray) receives all three gradient fields, which are structurally non-conflicting at the encoder output (\Cref{theorem:gradient_decomp}). \textit{Figure generated with the assistance of Google Gemini, accessed June 16, 2026.}}
    \label{fig:architecture}
\end{figure}

\section{Full Taxonomy of Prior Methods}
\label{appendix:taxonomy}

Table~\ref{tab:taxonomy} lists every major self-supervised method recovered as a special case of the unified energy of \Cref{eq:unified}, organized by which of the three principles each satisfies. Methods marked $\dagger$ prevent collapse only through implicit mechanisms and therefore carry no collapse guarantee at convergence (Theorem~\ref{theorem:ema_collapse}). Methods marked $\ddagger$ realize prediction through a shared online tokenizer head rather than a dedicated predictor module. Methods satisfying the regularization principle (\cmark) but also retaining an implicit mechanism are marked $\dagger$ in the method name column only, to indicate the redundant implicit mechanism while preserving the correct \cmark\ in the Reg column; C-JEPA and DINOv2 fall in this category.

\begin{table}[ht]
    \small
    \centering
    \caption{All major self-supervised methods recovered as special cases of \Cref{eq:unified}, as stated in Proposition~\ref{proposition:unification}. Projector $h_\psi$ realizes Definition~\ref{def:obs}; predictor $g_\phi$ realizes Definition~\ref{def:pred}; $\Omega$ realizes Definition~\ref{def:reg}. $\dagger$~in the method name denotes a redundant implicit mechanism (momentum encoder or centering) alongside the explicit regularizer; $\dagger$~in the Reg column denotes implicit collapse prevention only; $\ddagger$~denotes prediction via a shared head rather than a dedicated predictor.}
    \label{tab:taxonomy}
    \resizebox{\textwidth}{!}{%
    \begin{tabular}{lccccccc}
        \toprule
        Method & Projector $h_\psi$ & $d$ & Predictor $g_\phi$ & $\Omega$ & Obs & Pred & Reg \\
        \midrule
        SimCLR~\cite{chen2020simclr}           & MLP       & InfoNCE       & none            & none    & \cmark & \xmark & \xmark$^\dagger$ \\
        MoCo~\cite{he2020momentum}             & MLP       & InfoNCE       & none            & none    & \cmark & \xmark & \xmark$^\dagger$ \\
        BYOL~\cite{grill2020byol}              & Asym. MLP & MSE           & none            & none    & \cmark & \xmark & \xmark$^\dagger$ \\
        SimSiam~\cite{chen2021simsiam}         & MLP       & neg. cos.     & none            & none    & \cmark & \xmark & \xmark$^\dagger$ \\
        DINO~\cite{caron2021dino}              & MLP       & CE (centered) & none            & none    & \cmark & \xmark & \xmark$^\dagger$ \\
        Barlow Twins~\cite{zbontar2021barlow}  & MLP       & cross-corr.   & none            & implicit & \cmark & \xmark & \cmark \\
        VICReg~\cite{bardes2022vicreg}         & MLP       & MSE           & none            & VCReg   & \cmark & \xmark & \cmark \\
        \midrule
        I-JEPA~\cite{assran2023ijepa}          & none      & none          & ViT             & none    & \xmark & \cmark & \xmark$^\dagger$ \\
        LeJEPA~\cite{balestriero2025lejepa}    & none      & none          & ViT             & SIGReg  & \xmark & \cmark & \cmark \\
        LpJEPA~\cite{kuang2026lpjepa}          & none      & none          & ViT             & RDMReg  & \xmark & \cmark & \cmark \\
        \midrule
        iBOT~\cite{zhou2022ibot}               & MLP       & CE (centered) & shared head$^\ddagger$ & none & \cmark & \cmark & \xmark$^\dagger$ \\
        C-JEPA$^\dagger$~\cite{mo2024cjepa}    & MLP       & MSE           & ViT             & VCReg   & \cmark & \cmark & \cmark \\
        DINOv2$^\dagger$~\cite{oquab2024dinov2}& MLP       & CE (centered) & shared head$^\ddagger$ & KoLeo & \cmark & \cmark & \cmark \\
        \midrule
        \rowcolor{gray!12}
        Ours & MLP & NT-Xent / MSE & ViT & Any & \cmark & \cmark & \cmark \\
        \bottomrule
    \end{tabular}}
\end{table}

\section{Regularization Objectives in Full}
\label{appendix:objectives}

Each regularizer operates on the batch matrix of projected representations $\mathbf{P} = [h_\psi(\sx); h_\psi(\sy)] \in \mathbb{R}^{2B \times D'}$.

\paragraph{\bf Variance-covariance regularization (VCReg).} The variance and covariance terms of VICReg~\cite{bardes2022vicreg} without the invariance term:
\begin{equation}
    \mathcal{L}_{\mathrm{VCReg}} = \mu\, \mathcal{L}_{\mathrm{var}}(\mathbf{P}) + \nu\, \mathcal{L}_{\mathrm{cov}}(\mathbf{P}),
    \label{eq:vcreg}
\end{equation}
\begin{equation}
    \mathcal{L}_{\mathrm{var}}(\mathbf{P}) = \frac{1}{D'} \sum_{d=1}^{D'} \max\!\left(0,\, \gamma_0 - \sqrt{\mathrm{Var}(\mathbf{P}^{(:,d)}) + \delta}\right), \qquad
    \mathcal{L}_{\mathrm{cov}}(\mathbf{P}) = \frac{1}{D'} \sum_{i \neq j} C_{ij}^2,
    \label{eq:var_cov}
\end{equation}
with $C$ the batch covariance of $\mathbf{P}$, $\gamma_0 = 1$, $\delta = 10^{-4}$, $\mu = 25$, $\nu = 1$.

\paragraph{\bf Isotropic-Gaussian regularization (SIGReg).} Matching the characteristic function of the batch distribution to a standard isotropic Gaussian via the Epps-Pulley statistic~\cite{balestriero2025lejepa}:
\begin{equation}
    \mathcal{L}_{\mathrm{SIGReg}} = \mathbb{E}_{\boldsymbol{\omega}}\!\left[ \left| \mathbb{E}_{\mathbf{p}}[\cos(\boldsymbol{\omega}^\top \mathbf{p})] - e^{-\|\boldsymbol{\omega}\|^2/2} \right|^2 + \left| \mathbb{E}_{\mathbf{p}}[\sin(\boldsymbol{\omega}^\top \mathbf{p})] \right|^2 \right],
    \label{eq:sigreg}
\end{equation}
with $\boldsymbol{\omega}$ sampled on the unit sphere using $K = 17$ quadrature points.

\paragraph{\bf Sliced-Wasserstein regularization (RDMReg).} Aligning the batch distribution to a rectified generalized Gaussian target $\mathbf{X}_{\mathrm{RGG}}$ via sliced Wasserstein distance~\cite{kuang2026lpjepa}:
\begin{equation}
    \mathcal{L}_{\mathrm{RDMReg}} = \mathbb{E}_{\boldsymbol{\theta}}\!\left[ \mathcal{W}_2^2\!\left( \mathrm{sort}(\mathbf{P}^\top \boldsymbol{\theta}),\; \mathrm{sort}(\mathbf{X}_{\mathrm{RGG}}^\top \boldsymbol{\theta}) \right) \right],
    \label{eq:rdmreg}
\end{equation}
over $128$ random one-dimensional projections, with target shape parameter $p = 1$, zero mean, and unit variance after rectification.

\section{Hyperparameters}
\label{appendix:hyperparams}

\paragraph{\bf Pretraining.} We pre-train on STL-10 with a ViT-Tiny encoder (patch size 8, embedding dimension 192, depth 12, three heads) at $96 \times 96$ resolution, yielding $144$ patch tokens. The predictor is a narrow Vision Transformer (embedding dimension 96, depth 4) and the projector maps the pooled representation through width 1024 to output dimension 256. Each image yields two views; masking samples one context block at scale $[0.85, 1.0]$ and four target blocks at scale $[0.15, 0.2]$. Training runs for 200 epochs (10 warmup) with AdamW at batch size 512, a peak learning rate of $5 \times 10^{-4}$ decayed cosinewise to $1 \times 10^{-6}$, weight decay annealed from $0.04$ to $0.4$, and bfloat16 precision. The loss weights are $\beta = 1.0$ for prediction, $\alpha = 0.1$ for NT-Xent ($1.0$ for MSE), and $\gamma = 0.05$, $0.02$, $1.0$ for SIGReg, VCReg, and RDMReg respectively. Remaining settings are the NT-Xent temperature $\tau = 0.07$, $K = 17$ SIGReg quadrature points, $128$ RDMReg projections, VCReg coefficients $\mu = 25$ and $\nu = 1$, and a momentum coefficient $m = 0.996$ where applicable. Every experiment is run over five random seeds.

\paragraph{\bf Linear probing.} We freeze the pre-trained encoder and train a single linear head on its mean-pooled representations for 30 epochs with AdamW at batch size 512, reporting top-1 accuracy on the STL-10 test split. No augmentation other than standard normalization is applied at probe time, and the encoder receives no gradient.

\paragraph{\bf Patch-level retrieval.} We extract patch tokens from the frozen encoder for all test images and a random subset of $10{,}000$ training images. For each of the $144$ query patches per test image, we retrieve the $k$ nearest training patches by cosine similarity and measure spatial recall@$k$: the proportion of retrieved patches whose spatial quadrant (top-left, top-right, bottom-left, bottom-right) matches the query patch quadrant. We report $k \in \{1, 5, 10\}$ and average over all query patches and test images.

\paragraph{\bf Effective rank.} We report effective rank as a scalar measure of representational non-degeneracy. Given the covariance $\Sigma$ of the projector outputs over a batch with eigenvalues $\{\lambda_i\}_{i=1}^{D'}$ and normalized spectrum $p_i = \lambda_i / \sum_j \lambda_j$, the effective rank is $\exp\!\left(-\sum_i p_i \log p_i\right)$, the exponential of the Shannon entropy of the normalized eigenspectrum. A value near $1$ indicates collapse onto a single direction, while a value approaching $D'$ indicates an isotropic, fully utilized representation space. Effective rank is a collapse diagnostic, not a downstream quality predictor; high effective rank is necessary but not sufficient for high linear-probe accuracy, as illustrated by Table~\ref{tab:reg}.

\section{Full Proofs}
\label{appendix:proofs}

This section gives the proofs deferred from the main text, in the order they are referenced: the collapse theorem, gradient complementarity, the strict positivity of each regularizer at the collapsed solution, and momentum-encoder redundancy.

\subsection{Full Proof of Theorem~\ref{theorem:collapse}}
\label{appendix:collapse}

\begin{proof}[Full proof of Theorem~\ref{theorem:collapse}]
Let $f_\theta(\bx) = \mathbf{c}\,\mathbf{1}_N^\top$ for all $\bx$. Then $\sx = \sy = \mathbf{c}$, so $h_\psi(\sx) = h_\psi(\sy)$, giving $\mathcal{L}_{\mathrm{MSE}} = \frac{1}{B}\sum_{n} \|h_\psi(\sx^{(n)}) - h_\psi(\sy^{(n)})\|^2 = 0$. For the predictive loss, $f_\theta(\by)^{(t)} = \mathbf{c}$ for all $n, t$. The stop-gradient on $f_\theta(\by)^{(t)}$ blocks gradient flow from the target representations back through the encoder, so the collapsed solution is achievable via the context branch alone: a predictor with zero attention weights and output bias $\mathbf{c}$ satisfies $g_\phi(\mathbf{c}\,\mathbf{1}_{|\mathcal{C}|}^\top, m_t) = \mathbf{c}$, giving $\Lpred = 0$. Both terms are non-negative and zero, so the collapsed solution is a global minimizer for any $\alpha, \beta > 0$.
\end{proof}

\subsection{Full Proof of Theorem~\ref{theorem:gradient_decomp}}
\label{appendix:grad}

\begin{proof}[Full proof of Theorem~\ref{theorem:gradient_decomp}]
We prove the uniform projection for both $\mathcal{L}_{\mathrm{MSE}}$ and $\mathcal{L}_{\mathrm{NT\text{-}Xent}}$ under a mean-pool projector architecture, where $h_\psi$ acts on $\sx = \frac{1}{N}\sum_i f_\theta(\bx)^{(i)}$.

\emph{Gradient of $\mathcal{L}_{\mathrm{MSE}}$.} The loss depends on each patch token $f_\theta(\bx)^{(i)}$ only through $\sx$. By the chain rule, $\partial \mathcal{L}_{\mathrm{MSE}} / \partial f_\theta(\bx)^{(i)} = (\partial \mathcal{L}_{\mathrm{MSE}} / \partial \sx) \cdot (\partial \sx / \partial f_\theta(\bx)^{(i)}) = (\partial \mathcal{L}_{\mathrm{MSE}} / \partial \sx) \cdot \frac{1}{N}\mathbf{I}_D$, which is uniform across all $i$.

\emph{Gradient of $\mathcal{L}_{\mathrm{NT\text{-}Xent}}$.} The NT-Xent loss operates on $h_\psi(\sx^{(n)})$, which depends on $f_\theta(\bx)^{(i)}$ only through $\sx^{(n)}$ (via mean pooling followed by the MLP projector $h_\psi$). The same chain rule gives $\partial \mathcal{L}_{\mathrm{NT\text{-}Xent}} / \partial f_\theta(\bx)^{(i)} = (\partial \mathcal{L}_{\mathrm{NT\text{-}Xent}} / \partial \sx) \cdot \frac{1}{N}\mathbf{I}_D$, also uniform. This argument holds for any projector architecture where $h_\psi$ acts on the pooled token $\sx$ rather than on individual patch tokens.

\emph{Gradient of $\Lpred$.} $\Lpred$ receives only context tokens $f_\theta(\bx)|_\mathcal{C}$ as input to $g_\phi$; masked tokens $i \notin \mathcal{C}$ do not appear in the computational graph of $\Lpred$. The stop-gradient on $\syn{n}{t} = f_\theta(\by)^{(t)}$ blocks all flow through the target branch. Therefore $\partial \Lpred / \partial f_\theta(\bx)^{(i)} = 0$ for all $i \notin \mathcal{C}$. In expectation over random masking, the predictive gradient is non-zero for every $i$ but spatially non-uniform, its magnitude depending on local prediction difficulty. The combined gradient field from $\Linv$ (uniform global) and $\Lpred$ (spatially local, context-concentrated) is structurally non-conflicting at the encoder output. We emphasize that this disjointness is stated at the encoder output; the two fields still share the encoder parameters $\theta$, so parameter-space alignment (Remark~\ref{remark:param_conflict}) is a separate, empirical matter.
\end{proof}

\subsection{Regularization Prevents the Collapsed Solution}
\label{appendix:reg_prevents}

\begin{lemma}{VCReg is strictly positive at the collapsed solution}{var_collapse}
If $f_\theta$ is collapsed ($\bs_{y,n} = \mathbf{c}$ for all $n$), then $\mathcal{L}_{\mathrm{VCReg}} \geq \mu(\gamma_0 - \sqrt{\delta}) > 0$ for any $\mu > 0$ and $\gamma_0 > \sqrt{\delta}$.
\end{lemma}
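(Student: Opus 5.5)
The plan is to evaluate the two terms of \Cref{eq:vcreg} directly at the collapsed batch and show that the variance term alone supplies the bound. Under collapse every row of $\mathbf{P} = [h_\psi(\sx); h_\psi(\sy)]$ equals the same vector $h_\psi(\mathbf{c})$. The lemma is phrased in terms of $\bs_{y,n}$, but the regularizer acts on projected outputs, and $h_\psi$ is a deterministic function of the pooled token. So the first step is to note that $\bs_{x,n} = \bs_{y,n} = \mathbf{c}$ for all $n$ implies that all $2B$ rows of $\mathbf{P}$ coincide. I would state explicitly that collapse of the encoder output is being transported through the projector. This is the only point where some care is needed, and it is the step I would flag as the potential obstacle. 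If collapse were meant only on the $\by$ branch, the $\sx$ rows could differ. The intended reading, matching Theorem~\ref{theorem:collapse}, is that the encoder is constant, so both views collapse.

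Next I would compute the per-coordinate variances. For each $d \in [D']$ the column $\mathbf{P}^{(:,d)}$ is constant, so $\mathrm{Var}(\mathbf{P}^{(:,d)}) = 0$. Each hinge term then equals $\max(0, \gamma_0 - \sqrt{\delta})$, which is $\gamma_0 - \sqrt{\delta}$ under the hypothesis $\gamma_0 > \sqrt{\delta}$. Averaging over the $D'$ coordinates gives $\mathcal{L}_{\mathrm{var}}(\mathbf{P}) = \gamma_0 - \sqrt{\delta}$.

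For the covariance term, a constant batch has covariance $C = 0$, so $\mathcal{L}_{\mathrm{cov}} = 0$. More robustly, $\mathcal{L}_{\mathrm{cov}} \geq 0$ holds always, since it is a sum of squares, and nonnegativity is all the bound needs. With $\nu \geq 0$ this gives $\mathcal{L}_{\mathrm{VCReg}} \geq \mu(\gamma_0 - \sqrt{\delta})$, and the right-hand side is strictly positive because $\mu > 0$ and $\gamma_0 > \sqrt{\delta}$.

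Finally, I would instantiate the bound with the paper's values $\gamma_0 = 1$, $\delta = 10^{-4}$, $\mu = 25$. This gives $\mathcal{L}_{\mathrm{VCReg}} \geq 25 \cdot 0.99 > 0$, so the collapsed solution cannot reach value zero once $\gamma > 0$. This is the positivity at a Dirac mass required by Definition~\ref{def:reg}.
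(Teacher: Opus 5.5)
Your proof is correct and follows essentially the same route as the paper: at collapse every column of $\mathbf{P}$ has zero variance, so each hinge equals $\gamma_0 - \sqrt{\delta}$, and the nonnegative covariance term (with $\nu \geq 0$) then yields the bound. Your remark that collapse must hold on both the $\sx$ and $\sy$ rows of $\mathbf{P}$, i.e.\ that the encoder itself is constant, makes explicit an assumption the paper's one-line proof leaves implicit.
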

\begin{proof}
At collapse $\mathrm{Var}(\mathbf{P}^{(:,d)}) = 0$ for every $d$, so from \Cref{eq:var_cov} $\mathcal{L}_{\mathrm{var}}(\mathbf{P}) = \gamma_0 - \sqrt{\delta} > 0$ since $\gamma_0 = 1 > 10^{-2} = \sqrt{\delta}$, giving $\mathcal{L}_{\mathrm{VCReg}} \geq \mu(\gamma_0 - \sqrt{\delta}) > 0$.
\end{proof}

\begin{theorem}{VCReg prevents the collapsed global minimizer}{vcreg_prevents}
For any $\gamma > 0$, the collapsed solution is not a global minimizer of $\Ltotal = \alpha\mathcal{L}_{\mathrm{MSE}} + \beta\Lpred + \gamma\mathcal{L}_{\mathrm{VCReg}}$.
\end{theorem}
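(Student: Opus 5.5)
The approach is to exhibit a single non-collapsed configuration whose total loss is strictly smaller than the loss at any collapsed configuration. At a collapsed solution, $\mathcal{L}_{\mathrm{MSE}}\ge 0$ and $\Lpred\ge 0$. \Cref{lemma:var_collapse} gives $\mathcal{L}_{\mathrm{VCReg}}\ge \mu(\gamma_0-\sqrt{\delta})$. So every collapsed configuration has
\[
\Ltotal \ge \gamma\mu(\gamma_0-\sqrt{\delta}) =: L_{\mathrm{col}} > 0 .
\]
It therefore suffices to construct an admissible parameter choice with $\Ltotal < L_{\mathrm{col}}$.

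\textbf{Step 1: the witness.} I would take an encoder whose patch tokens are identical within an image (say all equal to a vector $\mathbf{v}(\mathbf{x}_n)$) and identical across the two views of the same image. The per-image vectors should be spread across the batch so that each coordinate of the projected batch matrix $\mathbf{P}$ has variance at least $\gamma_0^2$, with zero off-diagonal covariance.

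\textbf{Step 2: checking each term.}
\begin{itemize}
\item View agreement gives $h_\psi(\sx^{(n)})=h_\psi(\sy^{(n)})$, so $\mathcal{L}_{\mathrm{MSE}}=0$.
\item Within-image constancy lets a predictor that copies a context token (e.g.\ attention onto context with identity value and output maps) achieve $\Lpred=0$.
\item The variance hinge is zero once $\sqrt{\mathrm{Var}+\delta}\ge\gamma_0$. The covariance term vanishes if the batch of projected vectors is chosen with a diagonal covariance, for instance by taking $\pm$ sign patterns of an orthogonal design scaled by $\gamma_0$ and a linear (or identity-acting) $h_\psi$.
\end{itemize}
This gives $\Ltotal=0<L_{\mathrm{col}}$, so no collapsed point is a global minimizer.

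\textbf{Main obstacle: realizability.} The witness requires an encoder that is view-invariant yet image-discriminative, and such an encoder need not exist in the architecture class. For instance, two views could be literally identical across different images, and the batch size $2B$ might be too small relative to $D'$ to get an exactly decorrelated, full-variance design.

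I would first handle this by stating the result over a sufficiently expressive function class, treating $f_\theta,h_\psi,g_\phi$ as arbitrary measurable maps on the finite batch. On a finite batch with distinct images, any assignment of per-image vectors is then realizable.

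If exact zero covariance is not attainable, the fallback is a continuity argument. Move from the collapsed point along a small perturbation in which the per-image vectors differ by $\epsilon\mathbf{u}_n$, keeping views identical so MSE stays $0$ and $\Lpred$ stays $0$. The variance hinge then strictly decreases: its derivative in $\epsilon$ is negative where variance is zero. The covariance term is $O(\epsilon^4)$. Hence $\Ltotal$ drops below $L_{\mathrm{col}}$ for small $\epsilon$. This perturbation version only needs at least two distinct images in the batch, and I would present it as the main argument.
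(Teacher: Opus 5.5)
Your proposal is correct, up to one misstated step noted below, and it proves the theorem more carefully than the paper does.

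\textbf{Comparison with the paper.} The paper shares your opening bound: every collapsed configuration has $\Ltotal\ge\gamma\mu(\gamma_0-\sqrt{\delta})$ by \Cref{lemma:var_collapse}. It then argues with a global witness. An encoder whose projections have per-dimension variance at least $\gamma_0^2$ and zero off-diagonal covariance has $\mathcal{L}_{\mathrm{VCReg}}=0$; the paper calls this \emph{achievable by centering and scaling per dimension}. It then says $\mathcal{L}_{\mathrm{MSE}}$ and $\Lpred$ \emph{can be driven toward zero by training}. Two points are left unaddressed there:
\begin{itemize}
\item per-dimension centering and scaling does not remove off-diagonal covariance;
\item nothing shows the other two terms can be made small while $\mathcal{L}_{\mathrm{VCReg}}$ stays at zero.
\end{itemize}

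Your Steps 1--2 are that route made precise. Tokens constant within an image and shared by its two views give $\mathcal{L}_{\mathrm{MSE}}=\Lpred=0$ exactly, however the per-image vectors are spread, so simultaneity is automatic. Your $\pm\gamma_0$ orthogonal design then supplies the decorrelation, with two caveats:
\begin{itemize}
\item Since views are duplicated, only $B$ rows of $\mathbf{P}$ are distinct, so the design needs $B\ge D'+1$, not a condition on $2B$.
\item With the experimental $D=192<D'=256$, a linear $h_\psi$ cannot produce a full-rank diagonal covariance. The witness therefore needs the MLP nonlinearity or your arbitrary-map idealization.
\end{itemize}

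Your perturbation argument is a genuinely different, local route. It keeps $\mathcal{L}_{\mathrm{MSE}}=\Lpred=0$ along the path and trades the variance hinge against the covariance penalty. So it needs no exact decorrelation and only two distinct images, though it still assumes an encoder that is view-invariant yet image-discriminative. The global witness, by contrast, buys the stronger conclusion that a non-collapsed configuration attains $\Ltotal=0$ and is itself a global minimizer.

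\textbf{Correction to the perturbation step.} The hinge does not decrease to first order in $\epsilon$. Perturbing the projected rows by $\epsilon\mathbf{u}_n$ gives coordinate $d$ variance $\epsilon^2 V_d$, and $\gamma_0-\sqrt{\epsilon^2V_d+\delta}=(\gamma_0-\sqrt{\delta})-\epsilon^2V_d/(2\sqrt{\delta})+O(\epsilon^4)$. So the $\epsilon$-derivative at $\epsilon=0$ is zero; it is the derivative with respect to the variance that equals $-1/(2\sqrt{\delta})<0$. The conclusion still holds: $\mathcal{L}_{\mathrm{VCReg}}$ changes by $-\mu\epsilon^2\sum_d V_d/(2\sqrt{\delta}\,D')+O(\epsilon^4)$, because the covariance penalty is $O(\epsilon^4)$. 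The comparison should, however, be stated as $\epsilon^2$ against $\epsilon^4$.

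Also state the perturbation at the projector output, or choose $h_\psi$ with a non-degenerate Jacobian at $\mathbf{c}$. A perturbation of encoder outputs can be annihilated by a projector that is flat at $\mathbf{c}$. Since $\psi$ is free in the global minimization, fixing this costs nothing.
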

\begin{proof}
By Theorem~\ref{theorem:collapse} and Lemma~\ref{lemma:var_collapse} the collapsed solution achieves $\mathcal{L}_{\mathrm{MSE}} = \Lpred = 0$ and $\mathcal{L}_{\mathrm{VCReg}} \geq \mu(\gamma_0 - \sqrt{\delta}) > 0$, so $\Ltotal|_{\mathrm{collapsed}} \geq \gamma\mu(\gamma_0 - \sqrt{\delta}) > 0$. Any encoder whose projections satisfy $\mathrm{Var}(\mathbf{P}^{(:,d)}) \geq \gamma_0^2$ with zero off-diagonal covariance (achievable by centering and scaling per dimension) has $\mathcal{L}_{\mathrm{VCReg}} = 0$, so $\Ltotal = \alpha\mathcal{L}_{\mathrm{MSE}} + \beta\Lpred$, and each term can be driven toward zero by training. Hence $\Ltotal$ can be made strictly below $\gamma\mu(\gamma_0 - \sqrt{\delta})$, and the collapsed solution is not a global minimizer.
\end{proof}

\begin{lemma}{SIGReg and RDMReg are strictly positive at the collapsed solution}{sigreg_rdmreg_positive}
If $f_\theta$ is collapsed, the empirical batch distribution is a Dirac mass $\delta_{\mathbf{c}'}$, and $\mathcal{L}_{\mathrm{SIGReg}} > 0$, $\mathcal{L}_{\mathrm{RDMReg}} > 0$.
\end{lemma}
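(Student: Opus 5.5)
First I note that if $f_\theta$ is collapsed then $\sx^{(n)} = \sy^{(n)} = \mathbf{c}$ for every $n$. Hence every row of $\mathbf{P} = [h_\psi(\sx); h_\psi(\sy)]$ equals $\mathbf{c}' := h_\psi(\mathbf{c})$, and the empirical distribution of the rows is the Dirac mass $\delta_{\mathbf{c}'}$. For each regularizer I then evaluate the statistic at this Dirac mass. The key step is to exhibit one direction (or one frequency) on which the target distribution and $\delta_{\mathbf{c}'}$ are strictly distinguishable. Both losses are averages of nonnegative terms, so strict positivity of the whole loss then needs one more ingredient. I would supply it by continuity of the integrand in the direction, together with the sampling measure having full support on the sphere; for the finite-sample implementations, the nonzero term would appear with positive probability, or for every direction.

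For SIGReg I substitute $\mathbf{p} \equiv \mathbf{c}'$ into \Cref{eq:sigreg}. This gives $\mathbb{E}_{\mathbf{p}}[\cos(\boldsymbol{\omega}^\top\mathbf{p})] = \cos(\boldsymbol{\omega}^\top\mathbf{c}')$ and $\mathbb{E}_{\mathbf{p}}[\sin(\boldsymbol{\omega}^\top\mathbf{p})] = \sin(\boldsymbol{\omega}^\top\mathbf{c}')$. The integrand is therefore
\[
\bigl|e^{i\boldsymbol{\omega}^\top\mathbf{c}'} - e^{-\|\boldsymbol{\omega}\|^2/2}\bigr|^2 .
\]
The empirical characteristic function has modulus $1$, while the Gaussian one has modulus $e^{-\|\boldsymbol{\omega}\|^2/2} < 1$ for every $\boldsymbol{\omega} \neq 0$. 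By the reverse triangle inequality the integrand is therefore at least
\[
\bigl(1 - e^{-\|\boldsymbol{\omega}\|^2/2}\bigr)^2 > 0
\]
pointwise. If, as stated, $\boldsymbol{\omega}$ is sampled on the unit sphere, this lower bound is the constant $(1-e^{-1/2})^2$. So the expectation is at least this constant, independently of $\mathbf{c}'$ and of the quadrature, and positivity is immediate.

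For RDMReg I take any projection direction $\boldsymbol{\theta}$. The projected collapsed batch $\mathbf{P}^\top\boldsymbol{\theta}$ is the constant vector $(\mathbf{c}'^\top\boldsymbol{\theta})\mathbf{1}$, so its sorted version has zero spread. I would lower-bound the squared $\mathcal{W}_2$ distance between a point mass at $a$ and a distribution $\nu$ by the variance of $\nu$:
\[
\mathcal{W}_2^2(\delta_a,\nu) = \mathbb{E}_\nu[(X-a)^2] \ge \mathrm{Var}_\nu(X).
\]
Thus it suffices that the projected RGG target $\boldsymbol{\theta}^\top\mathbf{X}_{\mathrm{RGG}}$ is non-degenerate for every unit $\boldsymbol{\theta}$. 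I expect this to be the main obstacle, because rectification puts mass at zero and the target's covariance must be argued to be nonsingular. My plan is to note that the coordinates are i.i.d. with unit variance after rectification. Then
\[
\mathrm{Var}(\boldsymbol{\theta}^\top X) = \|\boldsymbol{\theta}\|^2 = 1
\]
for i.i.d. coordinates, so each slice contributes at least $1$ and the expectation is at least $1 > 0$. If the loss uses a finite target sample, the sample variance replaces the population variance; it is positive almost surely for a continuous-component target, which handles that case.
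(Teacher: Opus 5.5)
Your proof is correct and follows the same route as the paper. For SIGReg, the Dirac mass has a modulus-one characteristic function while the Gaussian target has modulus $e^{-\|\boldsymbol{\omega}\|^2/2}<1$; for RDMReg, a point mass is at strictly positive $\mathcal{W}_2$ distance from a non-degenerate target. Your explicit bounds $(1-e^{-1/2})^2$ and $\mathcal{W}_2^2 \ge \mathrm{Var}(\boldsymbol{\theta}^\top \mathbf{X}_{\mathrm{RGG}}) = 1$ per slice make the paper's one-line assertions quantitative and remove any dependence on which quadrature points or projections are drawn.

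One small correction to your finite-sample aside. Rectification gives each coordinate an atom at $0$, so an empirical target sample is constant along every slice with tiny but nonzero probability (all draws equal to $\mathbf{0}$). Hence ``almost surely'' should read ``with overwhelming probability''; the paper's phrase ``continuous target'' glosses over the same point.
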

\begin{proof}
For SIGReg, the characteristic function of $\delta_{\mathbf{c}'}$ at any $\boldsymbol{\omega} \neq \mathbf{0}$ has modulus one and deviates strictly from the Gaussian target $e^{-\|\boldsymbol{\omega}\|^2/2} < 1$; since the quadrature includes such a point, $\mathcal{L}_{\mathrm{SIGReg}} > 0$. For RDMReg, the sliced Wasserstein distance between a Dirac mass and a continuous target is strictly positive. In both cases Theorem~\ref{theorem:vcreg_prevents} applies with $\mathcal{L}_{\mathrm{reg}}$ replaced accordingly.
\end{proof}

\begin{remark}{Consistency with Definition~\ref{def:reg}}{reg_consistency}
Lemmas~\ref{lemma:var_collapse} and~\ref{lemma:sigreg_rdmreg_positive} verify that all three regularizers are strictly positive at every Dirac mass, satisfying the first clause of Definition~\ref{def:reg}. The minimizers of each regularizer are full-rank over $\mathbb{R}^D$: SIGReg is uniquely minimized by the standard isotropic Gaussian, RDMReg by the rectified generalized Gaussian target $\mathbf{X}_{\mathrm{RGG}}$, and VCReg by any per-dimension variance-floored, decorrelated distribution, all of which are full-rank. Hence each studied regularizer satisfies Definition~\ref{def:reg}.
\end{remark}

\subsection{Full Proof of Theorem~\ref{theorem:ema_collapse}}
\label{appendix:ema}

\begin{proof}[Full proof of Theorem~\ref{theorem:ema_collapse}]
\emph{Assumption:} $\theta(t) \to \theta^*$ as $t \to \infty$. The continuous-time momentum update $\dot{\bar\theta} = (1-m)(\theta - \bar\theta)$ has solution $\bar\theta(t) = e^{-(1-m)t}\bar\theta(0) + (1-m)\int_0^t e^{-(1-m)(t-s)}\theta(s)\,ds$. For any $\epsilon > 0$ choose $T$ with $\|\theta(s) - \theta^*\| \leq \epsilon$ for $s \geq T$; then for $t > T$, $\|\bar\theta(t) - \theta^*\| \leq e^{-(1-m)t}\|\bar\theta(0) - \theta^*\| + e^{-(1-m)(t-T)} M + \epsilon$ with $M = \sup_s \|\theta(s) - \theta^*\| < \infty$. As $t \to \infty$ both exponentials vanish, so $\|\bar\theta(t) - \theta^*\| \leq \epsilon$; since $\epsilon$ is arbitrary, $\bar\theta(t) \to \theta^*$. If $\theta^*$ is collapsed the momentum target converges to that state, whereas Theorem~\ref{theorem:vcreg_prevents} removes the collapsed state as a fixed point for any $\gamma > 0$. We note that the assumption of convergence of $\theta(t)$ simplifies the true coupled dynamics, which involve stop-gradient and are more complex; the theorem characterizes fixed points rather than training trajectories.
\end{proof}

\section{Batch Size and the Contrastive Gradient}
\label{appendix:batchsize}

We establish here that $\Lpred$ and $\Lreg$ maintain informative gradients independently of batch size, complementing the discussion of Remark~\ref{remark:ntxent_partial} on the self-limiting nature of the contrastive gradient.

$\Lpred$ (\Cref{eq:pred}) is a sum over samples and target patches, each term depending only on its own sample. The regularizers ($\mathcal{L}_{\mathrm{SIGReg}}$, $\mathcal{L}_{\mathrm{VCReg}}$, $\mathcal{L}_{\mathrm{RDMReg}}$) depend on within-batch statistics but require no paired negatives and maintain a strictly positive gradient at the collapsed solution for any $B \geq 2$ (Lemmas~\ref{lemma:var_collapse} and~\ref{lemma:sigreg_rdmreg_positive}). By contrast, the NT-Xent gradient scales with the number of in-batch negatives $K = B - 1$; as $B$ decreases, the implicit anti-collapse signal of Remark~\ref{remark:ntxent_partial} weakens proportionally, making explicit $\Lreg$ increasingly critical at small batch sizes. Table~\ref{tab:batchsize} shows linear-probe accuracy at $B \in \{128, 256, 512\}$ for the contrastive-only (row~\textcolor{gray}{H}) and full-model (row~\textcolor{gray}{J}) configurations.

\begin{table}[ht]
    \small
    \centering
    \caption{\textbf{Batch-size sensitivity of contrastive-only versus full-model configurations.} Without explicit regularization (row~\textcolor{gray}{H}), accuracy degrades more steeply as batch size decreases because the contrastive anti-collapse signal weakens with fewer negatives. The full model (row~\textcolor{gray}{J}) is more robust due to $\Lreg$ maintaining a batch-size-independent gradient.}
    \label{tab:batchsize}
    \begin{tabular}{cllccc}
        \toprule
        & $\Linv$ & $\Lreg$ & $B=128$ & $B=256$ & $B=512$ \\
        \midrule
        \textcolor{gray}{H} & NT-Xent & none   & 44.2 & 48.1 & 51.6 \\
        \rowcolor{gray!12}
        \textcolor{gray}{J} & NT-Xent & SIGReg & \textbf{52.8} & \textbf{53.7} & \textbf{55.0} \\
        \bottomrule
    \end{tabular}
\end{table}

\section{Supplementary Experiment: Effective Rank Dynamics}
\label{appendix:effrank}

Figure~\ref{fig:effrank} traces effective rank across training for the collapse experiment, the dynamic counterpart of Figure~\ref{fig:eigenspectrum}, as across-seed means over five seeds. Negative-free runs without regularization remain low-rank; contrastive runs begin at higher rank through implicit repulsion but do not stabilize on their own; the explicit regularizer holds effective rank in a healthy band, confirming Theorem~\ref{theorem:collapse} and Remark~\ref{remark:ntxent_partial}. The MSE + JEPA + SIGReg run (row~\textcolor{gray}{I}) stabilizes at low effective rank despite the regularizer, consistent with the gradient-magnitude imbalance discussed in Section~\ref{subsec:mse_anomaly}: the SIGReg gradient is insufficient to overcome the strong collapsing pull of the MSE term at the chosen hyperparameter settings.

\begin{figure}[ht]
    \centering
    \includegraphics[width=0.55\linewidth]{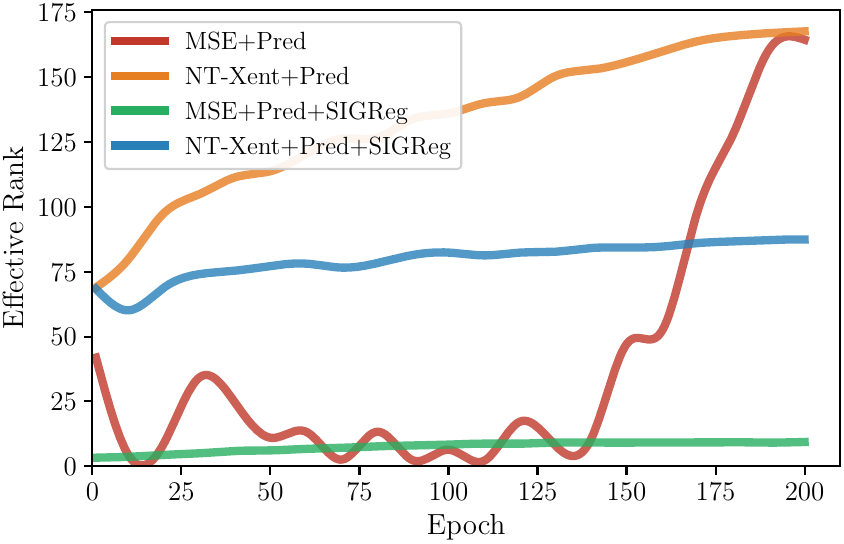}
    \caption{\textbf{Effective rank over training.} Negative-free alignment without regularization stays low-rank; the contrastive term raises rank but does not stabilize it; the explicit regularizer holds it in a healthy band. The MSE + SIGReg run (row~\textcolor{gray}{I}) stabilizes at low rank, reflecting a gradient-magnitude imbalance between MSE and SIGReg at the chosen hyperparameter settings. Curves are means over five seeds.}
    \label{fig:effrank}
\end{figure}

\section{Supplementary Experiment: Necessity of Each Principle}
\label{appendix:necessity}

Figure~\ref{fig:marginal_gain} summarizes the marginal contribution of each principle, the accuracy gained by adding it to the pair that omits it; no principle is redundant at the studied scale, and observation contributes the largest gain. Figure~\ref{fig:linprobe} traces linear-probe accuracy across training for every strict subset, the dynamic counterpart, with the all-three configuration dominating throughout and the single-principle runs clustering well below. All curves are across-seed means over five seeds.

\begin{figure}[ht]
    \centering
    \includegraphics[width=0.55\linewidth]{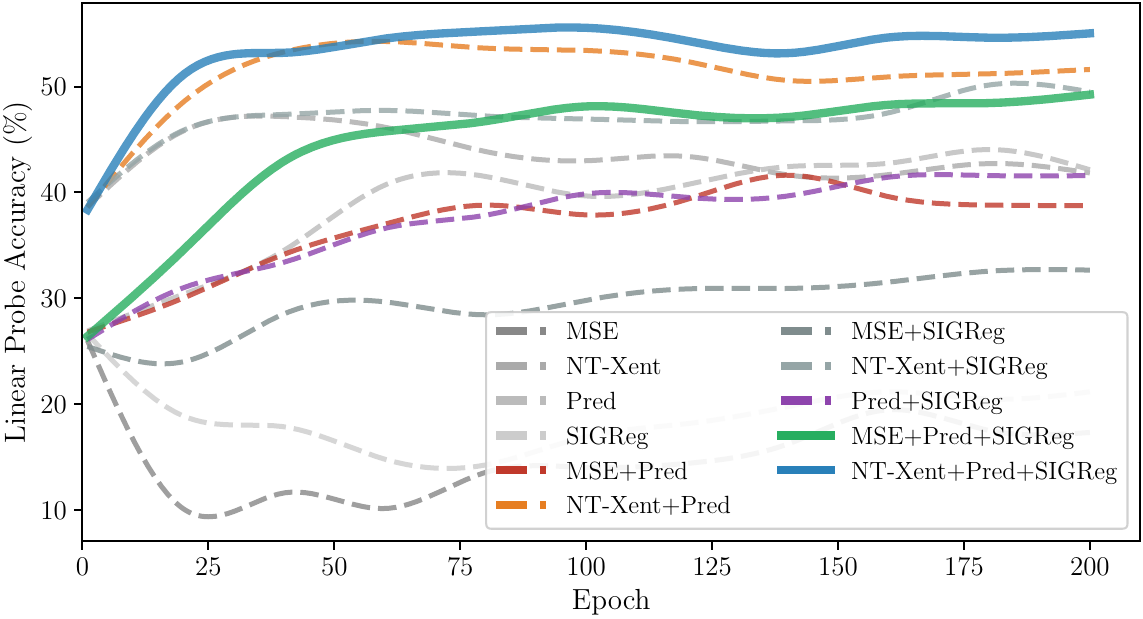}
    \caption{\textbf{Linear-probe accuracy over training for every strict subset.} The all-three configuration (top) dominates; single-principle runs cluster well below. Curves are means over five seeds.}
    \label{fig:linprobe}
\end{figure}

\section{Supplementary Experiment: Regularizer Curves}
\label{appendix:reg}

Figure~\ref{fig:reg} shows the linear-probe trajectories for the four regularizer choices, as across-seed means over five seeds. The isotropic-Gaussian and sliced-Wasserstein regularizers converge to the same accuracy; the variance-covariance regularizer lags despite the highest effective rank, consistent with the gradient-conflict hypothesis discussed in Section~\ref{sec:experiments}; the negative-free alignment trails every contrastive variant.

\begin{figure}[ht]
    \centering
    \includegraphics[width=0.55\linewidth]{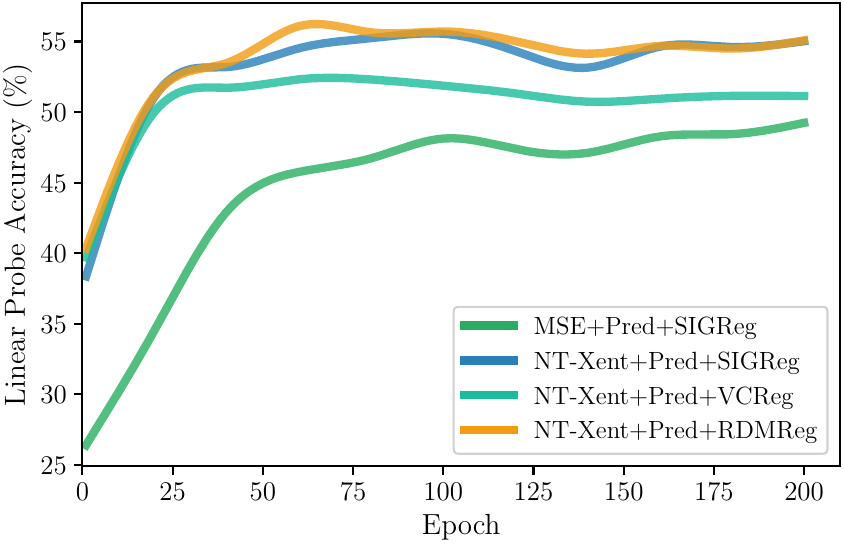}
    \caption{\textbf{Linear-probe accuracy for four regularizer choices.} The isotropic-Gaussian and sliced-Wasserstein regularizers are interchangeable; the variance-covariance regularizer lags despite the highest effective rank; the negative-free alignment trails every contrastive variant. Curves are means over five seeds.}
    \label{fig:reg}
\end{figure}

\section{Broader Impact}

Our contribution is analytical rather than a new state-of-the-art system: we identify which parts of a self-supervised objective are doing which work, and show that collapse prevention can be isolated in an explicit geometric term rather than entangled with the primary training signal. If this decomposition holds at scale, it offers a design discipline for future self-supervised algorithms, in which observation, prediction, and regularization can be specified, swapped, and diagnosed independently instead of being rediscovered implicitly through architectural tricks such as momentum encoders, centering, or carefully tuned asymmetries. This modularity also lowers the cost of principled ablation, since a practitioner can attribute a failure to a missing principle rather than to an opaque interaction, and it suggests a route to extending the same accounting to video and vision-language pre-training, where the corresponding objectives are less well understood. The broader risks are those shared by representation learning generally: encoders trained on uncurated data inherit the biases of that data, and a framework that makes such training cheaper or easier to tune propagates those biases more widely. Our validation is at a single small scale, so the practical claims should be read as hypotheses about larger models rather than guarantees, and we would caution against treating the effective-rank diagnostics we report as a proxy for representation quality in deployment settings.

\end{document}